\documentclass[10pt]{article} 
\usepackage{simpleConference}
\usepackage{times}
\usepackage{graphicx}
\usepackage{amssymb}
\usepackage{url,hyperref}
\usepackage[utf8]{inputenc}
\usepackage{csquotes}
\usepackage{amsmath}
\usepackage{amsfonts}
\usepackage{amsthm}
\usepackage{bbm}
\usepackage{mathrsfs}
\usepackage{adjustbox}
\usepackage{enumitem}
\usepackage{xcolor}
\usepackage{float}
\usepackage{biblatex}
\usepackage{caption}
\usepackage{setspace}
\usepackage{subfigure}
\usepackage{algorithm}
\usepackage{algpseudocode}

\usepackage{svg}
\usepackage{float}
\usepackage{breqn}  
\usepackage{booktabs} 
\usepackage{enumitem} 
\usepackage{palatino} 
\usepackage{xspace}
\def\sc{\textsc}

\addbibresource{refs.bib}

\theoremstyle{definition}
\newtheorem{definition}{Definition}[section]
\newtheorem{theorem}{Theorem}[section]
\newtheorem{corollary}{Corollary}[section]
\newtheorem{lemma}[theorem]{Lemma}
\begin{document}

\parindent=0mm
\parskip=2mm
\setlist[itemize]{noitemsep, topsep=0pt}

\title{Robust Streaming, Sampling, and a Perspective on Online Learning}
\author{\textbf{Evan Dogariu\quad\quad Jiatong Yu}
\\
Department of Computer Science, Princeton University
\\
\small
\texttt{\{edogariu, jiatongy\}@princeton.edu}
}

\maketitle
\thispagestyle{empty}

\renewenvironment{abstract}
 {\small
  \begin{center}
  \bfseries \abstractname\vspace{-.5em}\vspace{0pt}
  \end{center}
  \list{}{%
    \setlength{\leftmargin}{30mm}
    \setlength{\rightmargin}{\leftmargin}%
  }%
  \item\relax}
 {\endlist}

\setstretch{1.125}
\begin{abstract}
A young and rapidly growing field of theoretical computer science is that of robust streaming. The general subject of streaming faces many use cases in practice, coming up in problems like network traffic analysis and routing, reinforcement learning, database monitoring, server query response, distributed computing, etc. A nascent subfield of streaming concerns streaming algorithms that are robust to adversarially prepared streams, which can be found to have substantial practical grounding. For example, an adversary could submit a small amount of carefully chosen traffic to produce a denial-of-service attack in a network routing system; a robust routing algorithm in this setting would have immense practical use. We investigate this new field of robust streaming and in particular the formalization of robust sampling, which concerns sampling from an adversarially prepared stream to recover a representative sample. 

Throughout this survey, we also highlight, explore, and deepen the connection between the field of robust streaming and that of statistical online learning. On the surface, these fields can appear distinct and are often researched independently; however, there is a deep interrelatedness that can be used to generate new results and intuitons in both places. 

In this work we present an overview of statistical learning, followed by a survey of robust streaming techniques and challenges, culminating in several rigorous results proving the relationship that we motivate and hint at throughout the journey. Furthermore, we unify often disjoint theorems in a shared framework and notation to clarify the deep connections that are discovered. We hope that by approaching these results from a shared perspective, already aware of the technical connections that exist, we can enlighten the study of both fields and perhaps motivate new and previously unconsidered directions of research.
\end{abstract}
\newpage 
\tableofcontents
\newpage

\section{Preliminaries}
\subsection{Statistical Learning}
The field of statistical learning concerns itself with the formal study of questions such as which tasks are learnable from data, how much training data is required to fit a general distribution, and other topics of this flavor. We present formal frameworks for research in this field in the offline and online settings below.
\subsubsection{Offline Setting}
In the offline setting, a learner attempts to learn from a dataset that is available all at once (in contrast with the online setting, where data arrives sequentially in a stream). In order to formalize these ideas, we will supply some notation to the setting of learning from data. We focus on binary classification below and represent our hypothesis as subsets of the domain, but general statistical learning theory follows the same principles and can reduce to this framework.

We consider learning over a set system $(\mathcal{U}, \mathcal{R})$, where $\mathcal{U}$ is a universe of possible elements and $\mathcal{R} \subseteq 2^{\mathcal{U}}$ is a set of subsets of the universe. $\mathcal{R}$ is called the \textit{hypothesis class} in which a learner attempts to learn, for reasons we will see below. Let $\mathcal{S} = ((x_1, y_1), ..., (x_m, y_m)) \in \mathcal{U} \times \{0, 1\}$ be a finite training sample of size $m$ where $x_i$'s are sampled  and labeled either 0 or 1 according to some scheme. To be specific about the data generation, we work in the \textit{realizable} setting if $\mathcal{U}$ is equipped with a distribution over its elements $\mathcal{D}$ and there exists some hypothesis $f \in \mathcal{R}$ that generates the labels (i.e. each $y_i = 1$ if $x_i \in f$ and 0 otherwise). If not, then we are in the \textit{agnostic} setting in which the space $\mathcal{U} \times \{0, 1\}$ is equipped with a joint distribution $\mathcal{D}$. The realizability assumption assumes that there is some perfect ground truth hypothesis in $\mathcal{R}$ that we would like the learner to learn; otherwise, we would like to learn the best hypothesis we can. A \textit{learning algorithm} is any algorithm that, given a training sample $\mathcal{S}$, outputs a hypothesis $h \in \mathcal{R}$ about the training data. Hopefully, the learned hypothesis $h$ not only fits the training sample $\mathcal{S}$ well, but also generalizes well. We define the generalization error (or risk) of a hypothesis $h \in \mathcal{R}$ in the realizable setting to be 
$$L_{\mathcal{D}, f} = \underset{x \sim \mathcal{D}}{Pr}[\mathbbm{1}_{x \in h} \neq \mathbbm{1}_{x \in f}]$$
and in the agnostic setting to be
$$L_{\mathcal{D}} = \underset{(x, y) \sim \mathcal{D}}{Pr}[\mathbbm{1}_{x \in h}\neq y]$$
With all of the above notation out of the way, we can now make headway into the statistical learning theory. The first step in formalization is to define what is meant by a hypothesis class being \textit{learnable}; to this end, we introduce the following definitions.
\begin{definition}[PAC-Learnable]
    A hypothesis class $\mathcal{R}$ is PAC-learnable in the realizable setting if there exists a function $m_{\mathcal{R}}: (0, 1)^2 \rightarrow \mathbb{N}$ and a learning algorithm with the following property: For all accuracy and probability tolerances $\epsilon, \delta \in (0, 1)$, for all distributions $\mathcal{D}$ over $\mathcal{U}$ and all labeling sets $f \in \mathcal{R}$, then when running the learning algorithm on $m \geq m_{\mathcal{R}}(\epsilon, \delta)$ i.i.d samples drawn from $\mathcal{D}$ and labeled by $f$ the algorithm returns a hypothesis $h \in \mathcal{R}$ for which
    $$Pr[L_{\mathcal{D}, f}(h) \leq \epsilon] \geq 1 - \delta$$
\end{definition}
\begin{definition}[Agnostic PAC-Learnable]
    A hypothesis class $\mathcal{R}$ is PAC-learnable in the agnostic setting if there exists a function $m_{\mathcal{R}}: (0, 1)^2 \rightarrow \mathbb{N}$ and a learning algorithm with the following property: For all accuracy and probability tolerances $\epsilon, \delta \in (0, 1)$, for all distributions $\mathcal{D}$ over $\mathcal{U} \times \{0, 1\}$, then when running the learning algorithm on $m \geq m_{\mathcal{R}}(\epsilon, \delta)$ i.i.d samples drawn from $\mathcal{D}$ the algorithm returns a hypothesis $h \in \mathcal{R}$ for which
    $$Pr[L_{\mathcal{D}}(h) \leq \epsilon] \geq 1 - \delta$$
\end{definition}
Here, PAC stands for Probably-Approximately-Correct. Intuitively, these definitions mean that the correct hypothesis can be learned within arbitrary risk tolerance (the Approximately Correct) with high probability (the Probably) using a number of data points that is determined by these tolerances. We can characterize hypothesis classes with two concepts that are defined below (for intuition or examples, consult \cite{statistical_learning_book})
\begin{definition}[Uniform Law of Large Numbers, Sample Complexity]
\label{ulln}
    Let $(\mathcal{U, R})$ be a set system. A hypothesis class $\mathcal{R}$ admits a Uniform Law of Large Numbers if there exists a function $m_{\mathcal{R}}: (0, 1)^2 \rightarrow \mathbb{N}$ such that for any distribution $\mathcal{D}$ over $\mathcal{U}$ and any tolerances $\epsilon, \delta \in (0, 1)$, the empirical error $L_{\mathcal{S}}(h)$ over a sample $\mathcal{S}$ of size $m \geq m_{\mathcal{R}}(\epsilon, \delta)$ satisfies
    $$\forall h \in \mathcal{R} \quad \vert L_{\mathcal{S}}(h) - L_{\mathcal{D}}(h) \vert \leq \epsilon$$
    with probability at least $1 - \delta$. This property is often referred to as \textit{uniform convergence}. We call $m_{\mathcal{R}}(\epsilon, \delta)$ the \textit{sample complexity} of $\mathcal{R}$.
\end{definition}
\begin{definition}
\label{vc_def}[VC-Dimension]
    The VC-dimension of a hypothesis class $\mathcal{R}$, denoted $VCdim(\mathcal{R})$, is the maximal size of a set $C \subset \mathcal{U}$ that can be shattered by $\mathcal{R}$. If $\mathcal{R}$ can shatter sets of arbitrarily large size we say that $\mathcal{R}$ has infinite VC-dimension. Here, shattering has the usual definition where $\mathcal{R}$ shatters $C \subset \mathcal{U}$ if for every labeling of $C$ there exists an element of $\mathcal{R}$ that perfectly describes $C$.
\end{definition}

Equipped with the above tooling, we can now discuss the two fundamental theorems of VC/PAC theory. Proofs of both can be found in Chapter 5 of \cite{statistical_learning_book}.
\begin{theorem}[Fundamental Theorem of PAC Learning]
\label{pac}
    Let $(\mathcal{U},\mathcal{R})$ be a set system. The following are equivalent:
    \begin{enumerate}
        \item $\mathcal{R}$ is PAC learnable in the realizable and agnostic settings.
        \item $\mathcal{R}$ admits a Uniform Law of Large Numbers.
        \item $\mathcal{R}$ has a finite VC-dimension.
    \end{enumerate}
\end{theorem}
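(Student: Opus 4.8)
The plan is to prove the three-way equivalence by establishing the cycle of implications $(3) \Rightarrow (2) \Rightarrow (1) \Rightarrow (3)$, which is the standard route; of these, the first implication is the technical heart of the argument, the last is a slick but delicate probabilistic construction, and the middle one is nearly immediate.

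For $(2) \Rightarrow (1)$, I would analyze Empirical Risk Minimization: on input $\mathcal{S}$, output any $h_{\mathcal{S}} \in \arg\min_{h \in \mathcal{R}} L_{\mathcal{S}}(h)$. Invoke the Uniform Law of Large Numbers at accuracy $\epsilon/2$ and confidence $1 - \delta$, so that $\lvert L_{\mathcal{S}}(h) - L_{\mathcal{D}}(h) \rvert \leq \epsilon/2$ simultaneously for all $h \in \mathcal{R}$. Writing $h^\star$ for a (near-)minimizer of the true risk over $\mathcal{R}$, the chain $L_{\mathcal{D}}(h_{\mathcal{S}}) \leq L_{\mathcal{S}}(h_{\mathcal{S}}) + \epsilon/2 \leq L_{\mathcal{S}}(h^\star) + \epsilon/2 \leq L_{\mathcal{D}}(h^\star) + \epsilon$ gives agnostic PAC learnability with sample complexity $m_{\mathcal{R}}(\epsilon/2, \delta)$ inherited from Definition~\ref{ulln}; specializing to the realizable case forces $L_{\mathcal{D}, f}(h^\star) = 0$ and recovers realizable learnability, so both halves of condition~(1) follow at once.

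For $(1) \Rightarrow (3)$ I would argue the contrapositive: if $VCdim(\mathcal{R}) = \infty$ then $\mathcal{R}$ is not even realizably PAC learnable. Fix an arbitrary learner $A$ and sample size $m$; since the VC-dimension is infinite, choose $C \subseteq \mathcal{U}$ with $\lvert C \rvert = 2m$ that is shattered by $\mathcal{R}$. Take $\mathcal{D}$ uniform on $C$ and draw the labeling $f$ uniformly at random from the $2^{2m}$ dichotomies of $C$ realized inside $\mathcal{R}$. A learner seeing $m$ i.i.d.\ draws observes at most $m$ distinct points, so on the remaining (at least $m$) points its predictions are independent of their true labels; averaging over the random $f$, its expected error on the unseen portion is exactly $1/2$, hence its expected true risk is at least $1/4$. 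This is the No-Free-Lunch argument; an averaging step fixes a single $f \in \mathcal{R}$ for which $A$'s expected risk is $\geq 1/4$, and Markov's inequality then shows $A$ fails to reach risk below, say, $1/8$ with probability bounded away from $0$ — contradicting PAC learnability once $\epsilon,\delta$ are small enough.

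The real work is in $(3) \Rightarrow (2)$. Assuming $VCdim(\mathcal{R}) = d < \infty$, the first step is the Sauer--Shelah lemma, bounding the growth function by $\tau_{\mathcal{R}}(m) \leq \sum_{i=0}^{d} \binom{m}{i} = O(m^d)$. The second step is symmetrization: to control $\Pr_{\mathcal{S}\sim\mathcal{D}^m}[\,\sup_{h \in \mathcal{R}} \lvert L_{\mathcal{S}}(h) - L_{\mathcal{D}}(h)\rvert > \epsilon\,]$, introduce an independent ghost sample $\mathcal{S}'$ of size $m$ and show (for $m$ not too small) that this probability is at most twice $\Pr[\,\sup_h \lvert L_{\mathcal{S}}(h) - L_{\mathcal{S}'}(h)\rvert > \epsilon/2\,]$. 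The third step uses that this event depends only on the restriction $\mathcal{R}|_{\mathcal{S}\cup\mathcal{S}'}$, a set of size at most $\tau_{\mathcal{R}}(2m)$; conditioning on $\mathcal{S}\cup\mathcal{S}'$ and applying a random-swap (permutation) argument with Hoeffding's inequality bounds the conditional failure probability for each fixed $h$ by $\exp(-\Omega(m\epsilon^2))$, and a union bound over the $O((2m)^d)$ effective hypotheses yields total failure probability $O\!\big(m^d e^{-\Omega(m\epsilon^2)}\big)$, driven below $\delta$ by taking $m$ of order $(d + \log(1/\delta))/\epsilon^2$. I expect the main obstacle to lie exactly here: getting the symmetrization constants right, handling the (usually glossed-over) measurability of the suprema, and making precise the reduction to finitely many hypotheses via the restriction to $\mathcal{S} \cup \mathcal{S}'$ — this implication is the crux on which the whole theorem rests.
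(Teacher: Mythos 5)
Your outline is correct and is precisely the standard textbook proof — ERM combined with uniform convergence for $(2)\Rightarrow(1)$, the No-Free-Lunch averaging argument for $(1)\Rightarrow(3)$, and Sauer--Shelah plus ghost-sample symmetrization, a random-permutation argument, and a union bound over the restriction for $(3)\Rightarrow(2)$ — which is exactly the proof given in the reference the paper cites for this theorem. The paper itself offers no proof and simply defers to that reference, so your decomposition matches the intended one.
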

\begin{theorem}[Fundamental Theorem of PAC Learning - Quantitiative]
\label{quantitative pac}
    Let $(\mathcal{U},\mathcal{R})$ be a set system with finite VC-dimension $d$. Then, there exist constants $C_1, C_2$ such that:
    \begin{enumerate}
        \item $\mathcal{R}$'s Uniform Law of Large Numbers has sample complexity 
        $$C_1 \frac{d + \log(1/\delta)}{\epsilon^2} \leq m_{\mathcal{R}}(\epsilon, \delta) \leq C_2 \frac{d + \log(1/\delta)}{\epsilon^2}$$
        \item $\mathcal{R}$ is agnostic PAC learnable with sample complexity 
        $$C_1 \frac{d + \log(1/\delta)}{\epsilon^2} \leq m_{\mathcal{R}}(\epsilon, \delta) \leq C_2 \frac{d + \log(1/\delta)}{\epsilon^2}$$
        \item $\mathcal{R}$ is PAC learnable with sample complexity
        $$C_1 \frac{d + \log(1/\delta)}{\epsilon} \leq m_{\mathcal{R}}(\epsilon, \delta) \leq C_2 \frac{d \log(1/\epsilon) + \log(1/\delta)}{\epsilon}$$
    \end{enumerate}
\end{theorem}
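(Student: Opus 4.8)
The plan is to prove the matching upper and lower bounds separately: the upper bounds rest on two technical engines, while the lower bounds come from explicit hard distributions on shattered sets. The first engine is the \emph{Sauer--Shelah lemma}, which turns finiteness of $VCdim(\mathcal{R}) = d$ into a polynomial bound $\Pi_{\mathcal{R}}(m) \le \sum_{i=0}^{d}\binom{m}{i} = O(m^d)$ on the growth function (the number of distinct labelings that $\mathcal{R}$ induces on any $m$-point set); the second is a \emph{symmetrization / double-sampling} argument that replaces the uncountable class $\mathcal{R}$ by its restriction to a finite realized sample, so that a union bound becomes affordable. Since Theorem~\ref{pac} already tells us the three notions are finite simultaneously, all that remains is to pin down the rates. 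I would first state and prove Sauer--Shelah (by the standard shifting/induction argument), then establish uniform convergence, deduce agnostic PAC learnability from it via ERM, handle the realizable case, and close with the lower bounds.

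\textbf{Upper bounds.} For item (1), introduce a ghost sample $\mathcal{S}'$ of size $m$; the usual symmetrization inequality gives
$$\Pr\Big[\sup_{h \in \mathcal{R}}\lvert L_{\mathcal{S}}(h) - L_{\mathcal{D}}(h)\rvert > \epsilon\Big] \;\le\; 2\,\Pr\Big[\sup_{h \in \mathcal{R}}\lvert L_{\mathcal{S}}(h) - L_{\mathcal{S}'}(h)\rvert > \tfrac{\epsilon}{2}\Big],$$
and, conditioned on the $2m$ realized points, only $\Pi_{\mathcal{R}}(2m) = O((2m)^d)$ hypotheses are distinguishable; a Hoeffding tail per hypothesis together with a union bound over these, solved for $m$, yields $O\big((d\log(1/\epsilon) + \log(1/\delta))/\epsilon^2\big)$, and replacing the crude union bound by a Rademacher-complexity estimate (Massart's finite-class lemma followed by a Dudley-type chaining bound, as in Chapter~5 of \cite{statistical_learning_book}) removes the stray $\log(1/\epsilon)$ to give the claimed $O\big((d + \log(1/\delta))/\epsilon^2\big)$. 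Item (2) is then immediate: empirical risk minimization on a uniformly convergent sample returns a hypothesis whose true risk is within $2\epsilon$ of optimal. For item (3), the realizable assumption supplies a variance reduction --- a hypothesis with true error above $\epsilon$ stays consistent with $m$ i.i.d.\ samples with probability at most $(1-\epsilon)^m \le e^{-\epsilon m}$ --- so repeating the double-sampling argument (now demanding that a bad hypothesis be consistent on $\mathcal{S}$ yet wrong on $\ge \epsilon m / 2$ points of $\mathcal{S}'$) and again bounding the relevant labelings by $\Pi_{\mathcal{R}}(2m) = O((2m)^d)$ produces $O\big((d\log(1/\epsilon) + \log(1/\delta))/\epsilon\big)$, the $\log(1/\epsilon)$ being exactly the log of the growth function evaluated at $m \asymp d/\epsilon$.

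\textbf{Lower bounds.} Every lower bound exploits a set $C = \{c_1, \dots, c_d\}$ shattered by $\mathcal{R}$ (Definition~\ref{vc_def}). For the dimension terms $d/\epsilon$ and $d/\epsilon^2$ I would use a no-free-lunch construction: spread a distribution over $C$ so that a learner which has seen fewer than $d/2$ of the points has no information on the remaining coordinates, forcing expected error $\Omega(\epsilon)$ in the realizable case or a failed hypothesis test in the agnostic case. For the $\epsilon^{-2}$ term in items (1)--(2), concentrate on a single shattered point and let its label be a coin of bias $\tfrac{1}{2} \pm \epsilon$; distinguishing the two candidate hypotheses is a testing problem needing $\Omega(1/\epsilon^2)$ samples by a KL / anti-concentration argument. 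For the $\epsilon^{-1}$ term in item (3), use two shattered points carrying masses $1 - 2\epsilon$ and $2\epsilon$, so the learner needs $\Omega(1/\epsilon)$ draws merely to observe the rare point whose label it must get right. The additive $\log(1/\delta)$ in each case comes from requiring the construction to fool the learner with probability at least $\delta$.

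\textbf{Main obstacle.} The two genuinely delicate points are the symmetrization step --- one must verify that the ghost-sample event dominates the original event \emph{uniformly} over the uncountable class, which also brings in the usual measurability caveats --- and closing the $\log(1/\epsilon)$ gap in the agnostic upper bound, where the naive union bound over $\Pi_{\mathcal{R}}(2m)$ hypotheses is lossy and one must instead work with Rademacher complexity and chaining. I would deliberately \emph{not} attempt to remove the $\log(1/\epsilon)$ factor in item (3): the statement already tolerates that gap between its upper and lower bounds, and matching it is known to require considerably more intricate sample-compression machinery lying outside the scope of this survey.
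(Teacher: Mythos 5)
The paper does not give its own proof of this theorem --- it defers entirely to Chapter 5 of \cite{statistical_learning_book} --- and your sketch is precisely the standard argument found there: Sauer--Shelah plus ghost-sample symmetrization for the upper bounds, chaining to remove the stray logarithm in the agnostic rate, ERM to pass from uniform convergence to agnostic PAC, the realizable-case variance reduction, and no-free-lunch / biased-coin constructions for the lower bounds. This is a correct plan at the level of detail given. One small clarification worth making: Massart's finite-class lemma alone still carries a $\log m$ coming from the growth function, so the step that actually eliminates the $\log(1/\epsilon)$ in items (1)--(2) is Dudley's entropy integral paired with Haussler's $m$-independent packing estimate $N(\epsilon, \mathcal{R}) \le (C/\epsilon)^{O(d)}$ for VC classes, rather than ``Massart followed by chaining'' as phrased.
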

These two theorems characterize what types of hypothesis classes $\mathcal{R}$ are learnable, and how much data is required to learn them. We will see that the online setting has well-established results that mirror these.

\subsubsection{Online Setting} 
We now consider the online setting, in which samples of data are given to a learner sequentially in a stream. The learner learns by making predictions on the new data point, after which the true label of the point is revealed and the online learner can learn from it. Owing to the potential for previous training examples to not be useful in the future, we must define online learnability differently in both the realizable and agnostic settings than how we did in the PAC framework. 
\begin{definition}[Mistake Bounds, Realizable Online Learnability]
\label{online def realizable}
    Let $\mathcal{R}$ be a hypothesis class and let $A$ be an online learning algorithm. Given any sequence $\mathcal{S} = ((x_1,f(x_1)),...,(x_T,f(x_T))$, where $T$ is any integer and $f \in \mathcal{R}$ generates the labels, let $M_A(\mathcal{S})$ be the number of mistakes $A$ makes on the sequence $\mathcal{S}$. We denote by $M_A(\mathcal{R})$ the supremum of $M_A(\mathcal{S})$ over all sequences of the above form. A bound of the form $M_A(\mathcal{R}) \leq B < \infty$ is called a \textit{mistake bound}. We say that a hypothesis class $\mathcal{R}$ is online learnable in the realizable setting if there exists an algorithm $A$ for which $M_A(\mathcal{R}) \leq B < \infty$.
\end{definition}
\begin{definition}[Regret, Agnostic Online Learnability]
\label{online def agnostic}
    Let $\mathcal{R}$ be a hypothesis class and let $A$ be an online learning algorithm. Given any sequence $\mathcal{S} = ((x_1,y_1),...,(x_T,y_T))$, where $T$ is any integer, let $A(x_i)$ be the prediction that the algorithm made at step $i$ before being revealed the actual label $y_i$. We define the \textit{regret} to be the difference between the number of mistakes made by the learner and the number of mistakes made by the best hypothesis
    $$R_{\mathcal{S}}(T) = \sum_{i=1}^T \mathbbm{1}_{A(x_i) \neq y_i} - \min_{h \in \mathcal{R}}\sum_{i=1}^T \mathbbm{1}_{h(x_i) \neq y_i}$$
    We say that $\mathcal{R}$ is online learnable in the agnostic setting if the expected regret $\mathbb{E}_{\mathcal{S}}[R_{\mathcal{S}}(T)]$ is $o(T)$, or equivalently if $\lim_{T \rightarrow \infty} \mathbb{E}_{\mathcal{S}}[R_{\mathcal{S}}(T)] / T = 0$ (i.e. the amortized expected regret vanishes). We call $R_T(\mathcal{R})$ the \textit{optimal regret}, where $R_T(\mathcal{R})$ is the regret of the best learner against the worst adversarial sequence.
\end{definition}

There is a combinatorial measure, due to Littlestone \cite{littlestone}, that characterizes online learnability much in the same way that the VC-dimension characterizes offline learnability. 
\begin{definition}[Littlestone Dimension]
    Let $(\mathcal{U, R})$ be a set system. The definition of the Littlestone Dimension of $\mathcal{R}$, denoted $Ldim(\mathcal{R})$, is given using mistake-trees: these are binary decision trees whose internal nodes are labeled by elements of $\mathcal{U}$. Any root-to-leaf path corresponds to a sequence of pairs $(x_1, y_1), . . . , (x_d, y_d)$, where $x_i$ is the label of the $i$’th internal node in the path, and $y_i = 1$ if the $(i + 1)$’th node in the path is the right child of the $i$’th node, and otherwise $y_i = 0$. We say that a tree $T$ is shattered by $\mathcal{R}$ if for any root-to-leaf path $(x_1, y_1), . . . , (x_d, y_d)$ in $T$ there is an $h \in \mathcal{R}$ such that $x_i \in h \iff y_i = 1$ for all $i \leq d$. $Ldim(\mathcal{R})$ is the depth of the largest complete tree shattered by $\mathcal{R}$, with the convention that $Ldim(\emptyset) = -1$.
\end{definition}
The above definition is the online equivalent of the VC-dimension: intuitively, the tree $T$ is the largest set for which any path has a perfect predictor in $\mathcal{R}$, very much like how the set $C$ in Definition \ref{vc_def} is the largest set with a perfect predictor in $\mathcal{R}$ (however, the mistake tree framework encapsulates the sequential nature of the online learning problem). An immediate corollary of this is that $VCdim(\mathcal{R}) \leq Ldim(\mathcal{R})$. The following theorem characterizes online learning, given by Littlestone \cite{littlestone} in the realizable setting and by \cite{BenDavid2009AgnosticOL} in the agnostic setting.
\begin{theorem}
    Let $(\mathcal{U, R})$ be a set system. Then, $\mathcal{R}$ is online learnable in both the realizable and agnostic settings if and only if it has finite Littlestone dimension.
\end{theorem}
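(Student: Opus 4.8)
The plan is to prove the two implications separately, in each case tying the Littlestone dimension to the appropriate complexity measure: the mistake bound of Definition~\ref{online def realizable} in the realizable setting, and the regret of Definition~\ref{online def agnostic} in the agnostic setting. The realizable upper bound I would obtain from the \emph{Standard Optimal Algorithm} (SOA) of Littlestone \cite{littlestone}. Assuming $d := Ldim(\mathcal{R}) < \infty$, the SOA maintains the version space $V_t = \{h \in \mathcal{R} : h \text{ is consistent with } (x_1,y_1),\dots,(x_{t-1},y_{t-1})\}$ and, on input $x_t$, predicts the label $r\in\{0,1\}$ for which $V_t^{(r)} := \{h \in V_t : h(x_t)=r\}$ has the larger Littlestone dimension. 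The heart of the argument is the claim that every mistake strictly decreases $Ldim(V_t)$: if the SOA errs at round $t$, the prediction rule gives $Ldim(V_t^{(y_t)}) \le Ldim(V_t^{(1-y_t)})$, and were $Ldim(V_t^{(y_t)})$ also at least $Ldim(V_t)$, then both children would support shattered trees of depth $\ge Ldim(V_t)$, which hung beneath a root labeled $x_t$ would yield a tree of depth $Ldim(V_t)+1$ shattered by $V_t$ --- a contradiction. Since realizability keeps $V_t$ nonempty (so $Ldim(V_t)\ge 0$) and $Ldim(V_1)=d$, the SOA makes at most $d$ mistakes, i.e. $M_{\mathrm{SOA}}(\mathcal{R}) \le d < \infty$.

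For agnostic learnability under $Ldim(\mathcal{R}) = d < \infty$ I would reduce to prediction with expert advice, following \cite{BenDavid2009AgnosticOL}. Fixing a horizon $T$, for each $L \subseteq [T]$ with $|L| \le d$ define an expert $E_L$ that simulates the SOA but, on every round $t \in L$, predicts and then updates its version space with the \emph{opposite} of the SOA prediction (behaving exactly like the SOA on rounds outside $L$). The key lemma is that for every $h^\ast \in \mathcal{R}$ there is an $L^\ast$ with $|L^\ast|\le d$ such that $E_{L^\ast}$ predicts $h^\ast(x_t)$ on every round: feeding the SOA the labels $h^\ast(x_1),\dots,h^\ast(x_t)$ incurs at most $d$ disagreements with $h^\ast$ by the realizable bound just proved (applied to the stream that $h^\ast$ itself realizes), and taking $L^\ast$ to be exactly that set of disagreement rounds makes $E_{L^\ast}$'s flipped predictions and version-space updates coincide with $h^\ast$ by induction on $t$. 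Hence among the $N = \sum_{k=0}^d \binom{T}{k} \le (T+1)^d$ experts there is always one whose number of mistakes equals $\min_{h \in \mathcal{R}} \sum_{t=1}^T \mathbbm{1}[h(x_t)\neq y_t]$. Running Randomized Weighted Majority over these $N$ experts then gives expected regret $O(\sqrt{T\log N}) = O(\sqrt{dT\log T}) = o(T)$, which is exactly agnostic online learnability.

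For the converse I would argue the contrapositive, and it suffices to break realizable learnability. Given any online algorithm $A$ and any claimed bound $B$, take a complete tree of depth $B+1$ shattered by $\mathcal{R}$ (available since $Ldim(\mathcal{R}) = \infty$); the adversary walks from the root, at each node presenting its label $x_t$, reading off $A$'s prediction $\hat y_t$, and revealing $y_t = 1 - \hat y_t$ before descending to the corresponding child. Shattering supplies an $f \in \mathcal{R}$ consistent with the whole path, so the stream is realizable by $f$, yet $A$ errs on all $B+1$ rounds; hence no finite mistake bound can hold and $\mathcal{R}$ is not online learnable in the realizable setting. (The same walk, choosing at each step the label $A$ is least likely to output, forces $\Omega(T)$ expected regret against a zero-mistake competitor, so agnostic learnability fails too, though this is not needed to negate the conjunction.)

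I expect the main obstacle to be the agnostic upper bound: the SOA-derived expert family and the tracking lemma require care in verifying that the simulated version space of $E_{L^\ast}$ evolves in lockstep with the one obtained by feeding the SOA the labels of $h^\ast$, so that the best-in-hindsight hypothesis really is reproduced using only $d$ corrections. By contrast, the SOA dimension-drop claim and both lower-bound tree walks are short, and the final regret guarantee is a black-box invocation of the standard learning-with-experts bound.
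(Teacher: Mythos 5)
The paper does not actually prove this theorem; it only states it and attributes the realizable direction to Littlestone \cite{littlestone} and the agnostic direction to Ben-David, P\'al, and Shalev-Shwartz \cite{BenDavid2009AgnosticOL}. Your proof correctly reconstructs exactly those two cited arguments together with the standard lower bound, so there is no paper proof to diverge from --- you have supplied the proof the paper left implicit.

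On the substance: all three pieces are right. The SOA dimension-drop argument is stated precisely (if both children of the version space had Littlestone dimension $\ge Ldim(V_t)$, hanging them under a root labeled $x_t$ would shatter a tree of depth $Ldim(V_t)+1$), and nonemptiness of $V_t$ under realizability gives the floor of $0$, so mistakes are bounded by $d$. The agnostic reduction to experts is the Ben-David--P\'al--Shalev-Shwartz construction: the one point worth stating more explicitly is that expert $E_L$ must update its \emph{internal} SOA version space with the label the expert itself just predicted (not the adversary's revealed $y_t$); with that convention, the induction you gesture at goes through and $E_{L^\ast}$'s internal simulation evolves in lockstep with the SOA run on the $h^\ast$-labeled stream, giving the tracking lemma. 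The count $\sum_{k\le d}\binom{T}{k}\le (T+1)^d$ plus the $O(\sqrt{T\log N})$ RWM bound yields $O(\sqrt{dT\log T})=o(T)$, which suffices for learnability even though it is looser than the $\Theta(\sqrt{dT})$ optimum established later in the paper (Theorem~\ref{quantitative online}). The contrapositive via a shattered tree of depth $B+1$ is correct, and you are right that breaking the realizable case alone negates the conjunction; the parenthetical $\Omega(T)$-regret argument for the agnostic side is also sound provided one interprets ``least likely label'' against a possibly randomized learner, which forces $\ge T/2$ expected mistakes while a shattering hypothesis makes none.
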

Previous work in online learnability theory failed to give an adequate description of a uniform law of large numbers condition on online learnability a la Definition \ref{ulln} nor a tight quantitative bound on sample complexities a la Theorem. \ref{quantitative pac}
\subsection{Streaming}
\label{static stream sec}
In the streaming setting, data of large volume arrives sequentially (and rapidly, which requires efficient computation performance), and it is not realistic to store the entire data stream. Instead, algorithms need one or several scan through the stream and approximately answer queries to the data.  
\subsubsection{Data Stream Models}
An input stream of length $m$ in the format $(a_1, \Delta_1), (a_2,\Delta_2) \dots (a_m, \Delta_m)$ arrives sequentially, which describes an underlying frequency vector $f \in \mathbb{R}^{n}$ where $f_i = \sum_{t:a_t = i} \Delta_t$. Namely, $a_t$ describes the index of update and $\Delta_t$ describes the value of update to the frequency vector. This generic model is also called the \textit{turnstile model} of streaming. A common assumption bounds the maximum coordinate of frequency vector at any time step $t$. Let $f^{(t)}_i = \sum_{j\leq t: a_j = i} \Delta_j$, then $||f^{t}||_{\infty} \leq M$ for some $M > 0$.

Another commonly studied model assumes $\Delta_t \geq 0$, and is called the \textit{insertion-only model}. The above setup could then be simplified as follows. Given an input stream $a_1, a_2 \dots a_m$, the frequency vector $f \in \mathbb{R}^n$ is given by $f_i = |\{j \in [m]: a_j = i\}|$.

The streaming task is to respond to queries regarding the frequency vector $f$ at any timestamp $t$ with close approximation. One measure of success is known as strong tracking.

\begin{definition}[Strong Tracking] 
\label{strong tracking thm}
    Let $f^{(1)}, f^{(2)} \dots f^{(m)}$ be the frequency vectors given the input stream, and let $g: \mathbb{R}^n \rightarrow \mathbb{R}$ be the query function over the frenquency vectors. A streaming algorithm is said to provide $\epsilon$-strong g-tracking if, at each time step $t$, the approximation output $R_t$ satisfies 
    \[|R_t - g(f^{(t)})| \leq \epsilon |g(f^{t})|\]
\end{definition}
\subsubsection{Linear Sketches}
\textit{Linear sketching} is a crucial idea in many streaming algorithm designs. On a very high level, \textit{sketches} achieve dimensionality reduction by generating pseudo-random vectors with (limitedly) independent random variables. Let $\Delta_t \in \mathbb{R}^n$. Given a distribution $\mathcal{M}$ over $r \times n$ matrix space, and an evaluation function $\mathcal{F}: \mathbb{R}^{r\times n} \times \mathbb{R}^r \rightarrow R$ where $R$ is the output space, a linear sketching algorithm draws a (randomized) matrix $A \in \mathcal{M}$. The evaluation function is used to respond to queries by $F(A,A\Delta_t)$.

Consider the second moment estimation problem on turnstile model. Alon et al. \cite{linear_sketch} developed a linear sketch approach as follows. The second moment is defined as $F_2 = \sum_i f[i]^2$. For each $i,j$ define $X_{ij}$ to be a random vector of length $n$, with $\pm 1$-valued, $4$-wise independent random variables as elements. Update $X_{ij} = \langle f, X_{ij} \rangle$, and we have $E[X_{ij}^2] = F_2$ and $Var[X_{ij}^2] \leq 2F_2^2$. Define $Y_i$ to be the average of $X_{i1}, X_{i2}, \dots X_{i, O(log(1/\epsilon^2))}$; by Chebyshev's inequality $P(|Y_i - F_2| > \epsilon F_2) \leq O(1)$. Taking the medium of means, and let $Z$ be the medium of $Y_i$s, we have $P(|Z-F_2| > \epsilon F_2) < \delta$. This algorithm uses $O(\frac{1}{\epsilon^2} log(\frac{1}{\delta}))$ space to achieve $(\epsilon)$-approximation of $F_2$ with probability $1-\delta$.

In fact, the best-known algorithms for any problem in the turnstile model involves linear sketching \cite{sketch_suck}. Such applications include \textsc{frequency moments}, \textsc{heavy hitters}, \textsc{entropy estimation}, etc. S. Muthukrishnan\cite{0204_book} provides a detailed documentation of the aforementioned problems for interested readers.

\subsection{Sampling}
\label{static sampling sec}
Sampling in the streaming setting aims to select a small subset $S$, given sequentially-arrived input stream $X = \{x_1, x_2, \dots x_n\}$, that is representative of the input stream. Deterministic sampling algorithms either fail to use a desirably small space, or involves complicated design tailored toward particular problems \cite{adv_sampling}. Therefore this section will primarily discusses \textit{random sampling}.

The formal notion of being representative of the input stream is defined through $\epsilon$-approximation \cite{geom_book}. Intuitively, $S$ is a good representation of $X$ if any subset $R$ has similar density in $X$ and $S$, as formulated by Definition \ref{e-approx}.
\begin{definition}[$\epsilon$-Approximation]
\label{e-approx}
Let X be a stream and let $(\mathcal{U, R})$. A sample $S$ is an $\epsilon$-approximation of X with respect to $\mathcal{R}$ if, for any subset $R \in \mathcal{R}$, it holds that $\left|\frac{|R \cap X|}{|X|} - \frac{|R \cap S|}{|S|}\right| \leq \epsilon$. 
\end{definition}

An \textit{oblivious} sampler is a sampling algorithm that accepts or rejects an element based on index, independent of the values of the stream. Oblivious samplers are well-studied for a variety of sampling tasks, and are directly applied in the adversarial setting in Section 2. Below, we enumerate some classic oblivious samplers. 

\begin{itemize}
    \item \textbf{Bernoulli Sampler} $Ber(n,p)$ samples elements $i \in [n]$ independently with probability $p$.
    \item \textbf{Uniform Sampler} $Uni(n,k)$ randomly draws $k$ indexes $i_1,i_2 \dots i_k$, $i_j \in [1,n]$. Stream elements with matching indexes are sampled. 
    \item \textbf{Reservoir Sampler} $Res(n,k)$ keeps a sample of size $k$. The first $k$ arrivals of the stream are accepted. For an element with index $i>k$, it is accepted to the sample with probability $\frac{k}{i}$; if accepted, an element currently in the samply will be uniformly drawn to be replaced by the new element.
\end{itemize}

Several results connect sampling complexity in this setting with VC theory \cite{vc_1} \cite{vc_2}.
\begin{theorem}
\label{static-vc relation thm}
    Let $(\mathcal{U}, \mathcal{R})$ be a set system with VC-dimension $d$, and let $X$ be a stream drawn from $\mathcal{U}$. Let $S \subseteq X$ be a subset sampled uniformly at random with size $O \left(\frac{d+log(1/\delta)}{\epsilon^2} \right)$. Then $S$ is an $\epsilon$-approximation of $X$ with probability at least $1-\delta$.
\end{theorem}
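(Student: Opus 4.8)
The plan is to recognize this statement as the classical Vapnik--Chervonenkis uniform convergence theorem in disguise, so that it follows from (a quantitative refinement of) the machinery already assembled in Section~1.1. First I would set up the reduction to a law of large numbers: regard the stream $X$ with $N := |X|$ as inducing the uniform distribution $\mathcal{D}_X$ on its elements, and for each $R \in \mathcal{R}$ write $\mu(R) = |R \cap X|/|X| = \Pr_{x \sim \mathcal{D}_X}[x \in R]$ and $\hat\mu_S(R) = |R \cap S|/|S|$. Then ``$S$ is an $\epsilon$-approximation of $X$ with respect to $\mathcal{R}$'' is exactly the event $\sup_{R \in \mathcal{R}} |\mu(R) - \hat\mu_S(R)| \le \epsilon$, i.e.\ uniform convergence of empirical frequencies to true frequencies over the class of indicators $\{\mathbbm{1}_{x \in R} : R \in \mathcal{R}\}$, whose VC-dimension is $d$. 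If $S$ were drawn i.i.d.\ from $\mathcal{D}_X$, this is literally the Uniform Law of Large Numbers with the sample-complexity bound of Theorem~\ref{quantitative pac}(1) applied to $(\mathcal{U},\mathcal{R})$; since a uniformly random subset of size $k$ is drawn \emph{without} replacement, I would either invoke the standard fact that sampling without replacement is at least as concentrated as with replacement (Hoeffding/Serfling), or --- more cleanly --- carry out the symmetrization argument below, which handles the without-replacement case natively.

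To keep the statement self-contained I would reprove the relevant direction via the double-sampling (ghost-sample) technique, in the following order. (i) \textbf{Symmetrization}: introduce an independent second sample $S'$ of size $k$; using that $\hat\mu_{S'}(R)$ has variance $O(1/k)$ and hence concentrates around $\mu(R)$, show $\Pr[\sup_R |\mu(R) - \hat\mu_S(R)| > \epsilon] \le 2\,\Pr[\sup_R |\hat\mu_S(R) - \hat\mu_{S'}(R)| > \epsilon/2]$. (ii) \textbf{Conditioning and Sauer--Shelah}: condition on the multiset $S \cup S'$ of $2k$ elements; by symmetry the right-hand event equals one in which we randomly repartition these $2k$ points into two halves, so it depends on $R$ only through the restriction $R \cap (S \cup S')$, and by the Sauer--Shelah lemma there are at most $(2k)^d$ distinct such restrictions. (iii) \textbf{Tail bound plus union bound}: for each fixed restriction a Chernoff/Hoeffding bound for a random partition gives $\Pr[|\hat\mu_S(R) - \hat\mu_{S'}(R)| > \epsilon/2] \le 2e^{-c\epsilon^2 k}$, and a union bound over the $\le (2k)^d$ restrictions yields total failure probability $\le 4(2k)^d e^{-c\epsilon^2 k}$. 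Setting this $\le \delta$ and solving shows $k = O\!\big(\tfrac{d\log(1/\epsilon) + \log(1/\delta)}{\epsilon^2}\big)$ suffices.

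The main obstacle is the gap between what this elementary argument delivers --- a bound with a spurious $\log(1/\epsilon)$ factor multiplying $d$ --- and the clean $O\!\big(\tfrac{d + \log(1/\delta)}{\epsilon^2}\big)$ claimed here (matching Theorem~\ref{quantitative pac}(1)). Closing that gap requires a sharper chaining/packing analysis: one replaces the crude union bound over all $(2k)^d$ behaviors with a multi-scale ($\alpha$-net) argument governed by Haussler's packing bound, which controls the $\alpha$-covering number of the projected class as $(1/\alpha)^{O(d)}$ in a way that, after chaining across scales, absorbs the logarithm. I would cite this refinement (Talagrand; Haussler; Li--Long--Srinivasan) rather than reproduce it, since the remaining pieces --- the exact constant in the Chernoff step, verifying the variance bound used in symmetrization, and the bookkeeping for sampling without replacement --- are routine. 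Everything else is a direct specialization of the PAC uniform-convergence machinery already in hand.
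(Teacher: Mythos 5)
Your reduction is exactly right and matches the paper's, which in fact gives no proof at all: the paper simply remarks that Theorem~\ref{static-vc relation thm} is a restatement of the Uniform Law of Large Numbers sample-complexity bound (Theorem~\ref{quantitative pac}, item~1) applied to the uniform distribution on the stream, and defers to the cited VC-theory literature. Your additional observations --- that uniform sampling from $X$ is sampling \emph{without} replacement and so needs either a Serfling-type argument or the native double-sampling treatment, and that the elementary symmetrization plus Sauer--Shelah argument leaves a spurious $\log(1/\epsilon)$ factor that only a chaining/packing refinement removes --- are both correct and go beyond what the paper says.
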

Note that this is a restatement in sampling theory language of the sample complexity bound for a Uniform Law of Large Numbers (Definition \ref{ulln}) in the VC-theory (Theorem \ref{quantitative pac}, item 1) .

\section{Adversarial Robustness}
Recall from sections \ref{static stream sec} and \ref{static sampling sec} the key assumption that the input stream $X$ is chosen non-adaptively in advance. This assumption does not hold in many modern applications, where the streaming outputs may implicitly or directly affect the future input stream. An important area of application for streaming algorithms is routing \cite{evolution} \cite{network_entropy}, where adversarial robustness has started to play a key role in evaluation metrics \cite{network_entropy}. This section discusses some recent developments of adversarially robust streaming and sampling algorithms. The framework of analysis also contributes to section \ref{alon paper sec}, where Alon et al. develop the connection between adversarial robustness in sampling and online learnability.

\subsection{Adversarial Setting}
\label{adversarial setting sec}
We can model the adversarial streaming/sampling problem as a two-player game between \textsc{Adversary} and \textsc{algorithm} \cite{adv_sampling} \cite{streaming_framework}. In the most generic setting, \textsc{Adversary} is allowed unbounded computational power and can adaptively choose the next element in the stream based on \textsc{algorithm}'s output history.

For a streaming task, \textsc{algorithm} aims to respond to queries about its frequency vector $f$ in close approximation; for sampling, \textsc{algorithm} aims to admit a sample representative of the stream (i.e. an $\epsilon$-approximation of \textsc{adversary}'s input stream). At a high level, the goal of \textsc{Adversary} is to prevent \textsc{Algorithm} from obtaining good results, and the goal of \textsc{algorithm} is to be robust against this. Each round of the two-player game proceeds as follows.
\begin{enumerate}
    \item \textsc{Adversary} submits an element of the stream to \textsc{algorithm}. The choice of element could (probabilistically) depend on all elements submitted in previous rounds, and observations from \textsc{algorithm}'s output history.
    \item \textsc{Algorithm} update its internal state (estimation of $f$ or accepted sample $S$) based on the new element submitted, and output its response to queries regarding its internal state.
    \item \textsc{Adversary} observes \textsc{Algorithm}'s responses and proceeds to the next round. 
\end{enumerate}
\subsection{Robust Streaming}
Hardt and Woodruff \cite{sketch_suck} show that linear sketching is inherently \textit{non-robust} to adversarial inputs. This necessitates new robust streaming algorithms. This section covers recent developments to \textit{robustify} non-robust algorithms through sketch-switching.

\subsubsection{Vulnerability of Linear Sketching}
Hardt and Woodruff \cite{sketch_suck} in particular show that for problems that are at least as hard as $l_p$-norm estimation, linear sketching algorithms suffer from adversarial inputs\footnote{
Please refer to the original paper for formal proofs of the statements made in the algorithm design summary, which are omitted in this paper for simplicity.
}. 

\begin{definition}[\textsc{GapNorm}]
\label{gapnorm}
    Given input stream $X: \{x_1, x_2 \dots\}$ with each $x_t \in \mathbb{R}^n$, the \textsc{GapNorm}$(B)$ problem requires an algorithm to return $0$ if $||x_t||_2 \leq 1$ and return $1$ if $||x_t||_2 \geq B$ for some parameter $B$. If $x_t$ satisfies neither of the conditions, the output can be arbitrarily $0$ or $1$.
\end{definition}
The \sc{GapNorm} problem requires a slight modification to the data stream model in section \ref{static stream sec}: instead of $\Delta_t \in \mathbb{Z}$, each input from the data stream is denoted as a vector $x_t \in \mathbb{R}^n$. 
\begin{theorem}
    \label{thm sketch suck}
    There exists a randomized adversary that, with high probability, finds a distribution over queries on which linear sketching fails to solve \textsc{GapNorm}$(B)$ with constant probability for $B \geq 2$.
\end{theorem}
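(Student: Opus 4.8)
The plan is to exploit the one structural commitment a linear sketch makes: its output is a function of $Ax$ only, where $A\sim\mathcal{M}$ is an $r\times n$ matrix which — for the sketch to save space — has $r<n$, followed by arbitrary, possibly randomized post-processing $\phi$ into $\{0,1\}$. In the streaming formulation the algorithm's state after any prefix of queries is itself a linear image of the inputs, so by re-initializing between probes (or by carrying the $O(r)$-dimensional state along) we may assume the response to a query $x$ factors through $Ax$ alone. Since $\mathrm{Alg}(x+z)=\mathrm{Alg}(x)$ for every $z\in\ker A$, and $\dim\ker A\ge n-r\ge 1$ so $\ker A$ contains vectors of arbitrarily large norm, it would suffice for the \textsc{Adversary} to learn one unit vector $u$ with $\|Au\|_2\approx 0$: feed a genuinely small query first (answered $0$ by correctness), then feed $B'u$ with $B'\ge B$; the sketch is essentially unchanged, so the answer is still $0$, yet $\|B'u\|_2\ge B$, and since $B\ge 2>1$ this is an outright \textsc{GapNorm} error. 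Everything reduces to learning a near-kernel direction from the binary feedback without knowing $A$.

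To probe a direction $g$, I would have the \textsc{Adversary} feed a geometric ladder of scaled copies $\lambda g$ for $\lambda$ ranging over $[1,B]$ and a bit beyond: because $\mathrm{Alg}(\lambda g)$ factors through $\lambda\,Ag$, the response as $\lambda$ increases is, up to the slack $\phi$ is allowed, controlled by the single scalar $\|Ag\|_2$, so the location of the flip from $0$ to $1$ reveals $\|Ag\|_2$ up to constants. Aggregating many correlated ladders and invoking correctness on honest inputs together with Gaussian anti-concentration turns this into a genuine (approximate) oracle for the map $g\mapsto\|Ag\|_2$, even against a pathological $\phi$. Given such an oracle, the \textsc{Adversary} runs iterative elimination: starting from $V_0=\mathbb{R}^n$, in each of at most $r+1$ phases it locates within $V_i$ a direction of near-minimal $\|A\cdot\|_2$ and passes to the subspace $V_{i+1}\subseteq V_i$ orthogonal to the offending component, so that $\mathrm{rank}(A|_{V_i})$ drops by one per phase; after at most $r+1$ phases every unit $v$ in the surviving subspace $V$ satisfies $\|Av\|_2\le\eta$ for any prescribed accuracy $\eta$.

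For the exploit, pick $v\in V$ and rescale to $\|v\|_2=B$, so $\|Av\|_2\le B\eta$; for $\eta$ small enough the sketch $Av$ is statistically indistinguishable, in total variation over \textsc{Algorithm}'s coins, from the sketch of a vector of norm at most $1$, forcing $\phi$ to answer $0$ on $v$ as often as it (correctly) answers $0$ on small inputs — hence an error with constant probability. Choosing the number of ladders and their resolution to be polynomial in $n$, a union bound over the at most $r+1$ phases together with standard concentration of $\|Ag\|_2$ (upper tail, to calibrate what ``typical'' looks like) and anti-concentration (to read off flips cleanly) make the whole attack succeed with constant probability over $A\sim\mathcal{M}$ and the algorithm's randomness, which is exactly the claimed ``distribution over queries''.

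The hard part is the probing step: were $\phi$ a clean threshold on $\|Ax\|_2$ the whole argument would be immediate, but $\phi$ is arbitrary and may be chosen adversarially against the attacker, so the real work is showing that \emph{correctness on honest inputs alone} — combined with anti-concentration of Gaussian sketches — already forces enough monotone, threshold-like behavior along each probed ray to extract $\|Ag\|_2$, or at least to guarantee one phase of progress. Two lesser points also need care: the stateful accumulation in the stream (handled by re-initialization or by tracking the low-dimensional linear state), and the fact that the algorithm cannot evade the attack by declining to answer, since the \textsc{GapNorm} guarantee must hold unconditionally on every stream.
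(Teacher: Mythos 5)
Your high-level reduction is correct and matches the paper: the sketch's response factors through $Ax$, the kernel of $A$ is large, and once the adversary has even an approximate handle on it she can feed a big vector that the sketch cannot distinguish from a tiny one, producing a \textsc{GapNorm} error with constant probability. The place where the proposal diverges from Hardt--Woodruff, and where the genuine gap is, is exactly the step you flag as ``the hard part.''

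Your probing mechanism is a geometric ladder $\lambda g$ along a fixed direction, hoping to read off $\|Ag\|_2$ from where the $0/1$ answer flips. As you note, $\phi$ is arbitrary and has no obligation to be monotone or threshold-like in $\|Ax\|_2$ on inputs the adversary crafts; correctness only pins down $\phi$ on streams whose norms fall in $[0,1]\cup[B,\infty)$, and nothing forces a clean flip in between. The paper does not try to build such an oracle. Its trick is to probe with \emph{isotropic Gaussians} $x\sim N(0,\tau I_n)$ and exploit two facts: (i) after reducing to orthonormal rows, $Ax$ is spherically symmetric, so averaging over the algorithm's randomness the response can only depend on $\|P_A x\|_2$; and (ii) conditioned on $P_A x$, the unseen part $\|(I-P_A)x\|_2^2$ concentrates, so for a well-chosen $\tau$ correctness \emph{forces} the answer to correlate with $\|P_A x\|_2$. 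Then the adversary collects the positively-labeled probes, stacks them, and takes the top right singular vector, which lands almost entirely in $R(A)$. This is a statistically forced inference, not a pointwise oracle, and it is precisely the mechanism your proposal is missing.

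The iterative phase is also inverted relative to the paper, and as written it does not make progress. You propose finding a direction $v$ of near-\emph{minimal} $\|Av\|_2$ and passing to the orthogonal complement of ``the offending component''; but if $v$ is near the kernel, removing it from the domain does not drop $\mathrm{rank}(A|_{V_i})$, and the adversary has no access to $P_A v$ to excise it directly. The paper does the opposite: each phase recovers a direction $v_i$ of near-\emph{maximal} projection onto $R(A)$, restricts to $\{v_1,\dots,v_i\}^\perp$, and that is what eliminates one unknown dimension of the row space per phase, giving the $O(r)$ bound you cite. Your phase count $r+1$ matches the row-space-elimination version, which is further evidence the intended procedure is the one the paper uses, not the one written.
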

\begin{proof}
Consider the following two-player game strategy between \sc{Adversary} and \sc{algorithm}. \sc{algorithm} samples a matrix $A$ from a distribution $\mathcal{M}$. At each round, \sc{adversary} submits $x_t \in \mathbb{R}^n$, and \sc{algorithm} responds with evaluation function $F(A, Ax_t)$. \sc{Adversary} aims to learn the \textit{row space} $R(A)$ of \sc{algorithm}. If \sc{adversary} can successfully learn $R(A)$, the following strategy will force \sc{algorithm} to make mistakes on sufficiently many queries:
\begin{spacing}{1.2}\begin{center} \fbox{\parbox{0.7\textwidth}{
With $1/2$ probability, \sc{adversary} submits the zero vector in $\mathbb{R}^n$.\\ With $1/2$ probability, \sc{adversary} submits a vector $x_t $ in the kernel space of $A$.
}}
\end{center}
\end{spacing}
To learn the row space $R(A)$, \sc{Adversary}'s initial element $x_1$ is drawn from $N(0, \tau I_n)$, a mutivariate Gaussian distribution where $\tau I_n$ is the covariance matrix. From the properties of Gaussian variables, the projection of $x_1$ on \sc{Algorithm}'s sketch matrix $A$ is \textit{spherically symmetric}, and therefore the output will only depend on the the norm of projection $P_{A}(x_1)$. If $x_1$ happens to have a high correlation with $R(A)$, \sc{algorithm} will be tricked to calculate a larger estimation of $||P_A(x_1)||_2$. \cite{sketch_suck} proved the existence of a $\tau$ to make the aforementioned difference sufficiently large.

Therefore, \sc{adversary} can first submit multiple elements $x_t$ drawn from $N(0, \tau I_n)$, and aggregate them into a vector $v_1 \in \mathbb{R}^n$ that is almost entirely contained in $R(A)$. This is accomplished by aggregating $m = poly(n)$ \textit{positively-labeled} vectors $x_t$ from the repeated trials into a matrix $G \in \mathbb{R}^{m \times n}$, and obtain $v_1$ as the top right singular vector of $G$. Then $||P_A(v_1)||_2 \geq 1 - \frac{1}{poly(n)}$, and with a sufficiently large $m$ we can say $v_1$ is almost entirely contained in $R(A)$.

At each iteration of the previously attached scheme, after finding $v_1$ \sc{adversary} draws $x_t$ from  $N(0, \tau I_n)$ within the subspace orthogonal to $\{v_1,v_2 \dots \}$ and runs the attack. This effectively reduces one "unknown" dimension in $R(A)$ every time a new $v$ is found. Repeat the iterations until only a constant number of such unknown dimensions remain, and remaining attacks following the aforementioned strategy of switching between zero and kernel space vectors will trick \sc{algorithm} to make a sufficiently large number of mistakes.

The adversarial algorithm above leads to Theorem \ref{thm sketch suck}. 
\end{proof}

\subsubsection{Sketch Switching}
In light of the adversarial vulnerability of linear sketches, Ben-Eliezer and Yogev \cite{streaming_framework} developed generic robustification schemes for linear sketching. Two techniques were proposed to transform static streaming algorithms into adversarially robust ones. This section will focus on the sketch switching technique for its widespread usage in the static setting and its success in robustifying popular problems like \sc{Distinct Element}, \sc{$F_p$ Estimation}, \sc{Heavy Hitters}, and \sc{Entropy Estimation}.

\textbf{The Algorithm}

\textit{Sketch switching} achieves robustness by keeping multiple copies of strong tracking (Definition \ref{strong tracking thm}) algorithms. The high level goal of sketch switching is to change the current output as few times as possible and, when required to update output, move to different copies so that \sc{Adversary} has no enough information to attack. The detailed algorithm is illustrated in \ref{sketch switching alg}.

As long as the previous output based on $(a_{t-1},\Delta_{t-1})$ is a good multiplicative approximation of the estimate of $(a_t, \Delta_t)$ by the copy in use, \sc{algorithm} outputs the previous output, and internally updates the output state to the current estimation. When the previous output falls out of range with the internal estimation, \sc{algorithm} submits the current estimation given by $(a_t, \Delta_t)$, deactivates the current copy, and activate another copy of static strong tracking algorithms.

\begin{algorithm}[h]
\begin{algorithmic}[1]
\caption{Sketch Switching} \label{sketch switching alg}
\State $\lambda \gets \lambda_{\epsilon/8,m}(g)$
\State Initialize independent copies $A_1, A_2 \dots A_\lambda$ of $(\frac{\epsilon}{8}, \frac{\delta}{\lambda})$-strong $g$-tracking static algorithms
\State $\rho \gets 1$
\State $\hat{g} \gets g(\textbf{0})$
\While {new stream update $(a_t, \Delta_t)$}
\State Insert update $(a_t, \Delta_t)$ to all copies $A_1, A_2 \dots A_\lambda$
\State $y \gets$ current output of $A_\rho$
\If{$\hat{g} \notin (1\pm \epsilon/2)y$ } 
\State $\hat{g} \gets y$
\State $\rho \gets \rho+1$
\EndIf
\State Output estimation $\hat{g}$
\EndWhile
\end{algorithmic}
\end{algorithm}

A central definition in the sketch switching algorithm is \textit{flip number} (Definition \ref{flip def}). Lemma \ref{8 lemma} will come handy in proving the main theorem \ref{sketch switch main theorem}.
\begin{definition}[Flip Number]
\label{flip def}
    Let $Y = \{y_0, y_1, \dots, y_m\}$ be any sequence of real numbers. For $\epsilon\geq 0$, the $\epsilon$-flip number $\lambda_\epsilon(Y)$ is the maximum $k$ for which there exist indices $0 \leq i_1 < i_2 \dots < i_k \leq m$ such that $y_{i_{j-1}} \notin (1 \pm \epsilon)y_{i_j}$ for $j \in [2,k]$.
\end{definition}

\begin{definition}[$(\epsilon,m)$-Flip Number]
   Let $\mathcal{C} \subseteq ([n] \times \mathbb{Z})^m$ be the space of possible input streams. Let $g$ be the query function defined in Definition \ref{strong tracking thm}. Then the $Y = \{y_0, y_1, \dots, y_m\}$ where $y_t = g(f^{(t)})$. The $(\epsilon,m)$-flip number $\lambda_{\epsilon,m}(g)$ of $g$ over $\mathcal{C}$ is the maximum $\epsilon$-flip number of $Y$s generated by all possible input stream sequences of length $m$ in $\mathcal{C}$.
\end{definition}

\begin{lemma}
\label{8 lemma}
    Fix $\epsilon \in (0,1)$. Let $U = \{u_1, u_2 \dots u_m\}$, $V = \{v_1, v_2 \dots v_m\}$, and $W = \{w_0, w_1 \dots w_m\}$ be three sequences of real numbers that satisfy the following 
    \begin{enumerate}
        \item For $i \in [m]$, $v_i = (1 \pm \epsilon/8)u_i$
        \item $w_0 = v_0$
        \item For $i > 0$, if $w_{i-1} = (1\pm \epsilon/2)v_i$, then $w_i = w_{i-1}$. Else $w_i = v_i$.
    \end{enumerate}
    Then $w_i = (1 \pm \epsilon)v_i$ for $i \in [m]$. $\lambda_0(W) \leq \lambda_{\epsilon/8, m}(g)$.
\end{lemma}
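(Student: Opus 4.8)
The plan is to split the statement into its two independent halves. The first half, that $w_i = (1\pm\epsilon)v_i$, is essentially a one-line case check on property (3): for each $i \in [m]$, either the "else" branch fires and $w_i = v_i$, so $|w_i - v_i| = 0 \le \epsilon|v_i|$ trivially; or the "if" branch fires, in which case $w_i = w_{i-1}$ and the branch hypothesis says $w_{i-1} = (1\pm\epsilon/2)v_i$, so $|w_i - v_i| \le (\epsilon/2)|v_i| \le \epsilon|v_i|$. Either way $w_i \in (1\pm\epsilon)v_i$; no induction is actually needed. I would then add the short remark that composing with property (1) and using $(1-\epsilon/2)(1-\epsilon/8) > 1-\epsilon$ and $(1+\epsilon/2)(1+\epsilon/8) < 1+\epsilon$ for $\epsilon\in(0,1)$ upgrades this to $w_i = (1\pm\epsilon)u_i$, which is the bound that matters for strong tracking of the true query value $u_t = g(f^{(t)})$.

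For the second half I would first unwind $\lambda_0(W)$: since the $0$-flip number only rewards picking indices at which consecutive selected values differ, and all values within a maximal constant run of $W$ coincide, $\lambda_0(W)$ equals the number of maximal constant runs of $(w_0,\dots,w_m)$, i.e. $\lambda_0(W) = 1 + \lvert\{i\in[m] : w_i \ne w_{i-1}\}\rvert$. Writing the change indices as $i_1 < \dots < i_c$ and setting $i_0 = 0$, we get $\lambda_0(W) = c+1$, and it suffices to exhibit a length-$(c{+}1)$ witness for $\lambda_{\epsilon/8}(U)$ with $U = (g(f^{(0)}),\dots,g(f^{(m)}))$, since $\lambda_{\epsilon/8}(U)\le\lambda_{\epsilon/8,m}(g)$ by definition of the $(\epsilon,m)$-flip number. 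The natural candidate is $u_{i_0}, u_{i_1},\dots,u_{i_c}$, and the key claim is $u_{i_{j-1}} \notin (1\pm\epsilon/8)u_{i_j}$ for every $j\in[c]$. To prove it: on the block $[i_{j-1}, i_j-1]$ the sequence $W$ is constant, so $w_{i_j - 1} = w_{i_{j-1}} = v_{i_{j-1}}$ (using $w_0 = v_0$ when $j=1$, and the "else" branch at the change $i_{j-1}$ otherwise); and because $i_j$ is a change index, the "if" hypothesis failed there, i.e. $v_{i_{j-1}} = w_{i_j-1} \notin (1\pm\epsilon/2)v_{i_j}$. Now invoke the contrapositive of a triangle-inequality estimate: if $\lvert u_{i_{j-1}} - u_{i_j}\rvert \le (\epsilon/8)\lvert u_{i_j}\rvert$, then, using property (1) in the forms $\lvert v_{i_{j-1}} - u_{i_{j-1}}\rvert \le (\epsilon/8)\lvert u_{i_{j-1}}\rvert$, $\lvert v_{i_j} - u_{i_j}\rvert \le (\epsilon/8)\lvert u_{i_j}\rvert$, $\lvert u_{i_{j-1}}\rvert \le (1+\epsilon/8)\lvert u_{i_j}\rvert$, and $\lvert v_{i_j}\rvert \ge (1-\epsilon/8)\lvert u_{i_j}\rvert$, one gets $\lvert v_{i_{j-1}} - v_{i_j}\rvert \le \tfrac{25\epsilon}{56}\lvert v_{i_j}\rvert < \tfrac{\epsilon}{2}\lvert v_{i_j}\rvert$, contradicting the displayed non-closeness. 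Hence the claim, hence $\lambda_{\epsilon/8}(U)\ge c+1 = \lambda_0(W)$, finishing the second half.

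I expect the only genuinely delicate point to be that error-propagation estimate: the entire scheme hinges on $\epsilon/8$-closeness of the \emph{true} values $u_{i_{j-1}},u_{i_j}$ forcing strictly-less-than-$\epsilon/2$-closeness of the \emph{estimates} $v_{i_{j-1}},v_{i_j}$, and one must check the constants really leave slack ($25/56 < 1/2$), handle the degenerate case $u_{i_j}=0$ (then $v_{i_j}=0$ and the closeness relations collapse to exact equalities, which remains consistent), and — a purely presentational caveat — read properties (1) and (2) as also covering the index $0$, since $u_0, v_0, w_0$ all appear in the statement. Everything else is routine bookkeeping.
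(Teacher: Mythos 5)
Your proof is correct. Note that the paper itself never proves this lemma --- it is stated without proof and attributed to \cite{streaming_framework} --- so there is no in-paper argument to compare against, but your reconstruction holds up and is the natural one. For the first half, the one-line case check does give $w_i = (1\pm\epsilon/2)v_i$ directly; you are also right that the printed conclusion $w_i = (1\pm\epsilon)v_i$ is almost certainly a typo for $w_i = (1\pm\epsilon)u_i$, which is what the proof of Theorem \ref{sketch switch main theorem} actually uses (it needs $R_t \in (1\pm\epsilon)g(f^{(t)})$ with $u_t = g(f^{(t)})$), and your composed bound $(1+\epsilon/2)(1+\epsilon/8) < 1+\epsilon$ on $(0,1)$ delivers that. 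For the second half, the reduction $\lambda_0(W) = 1 + |\{i : w_i \neq w_{i-1}\}|$, the witness $(u_{i_0},\dots,u_{i_c})$, the identity $w_{i_j-1} = w_{i_{j-1}} = v_{i_{j-1}}$ (from the ``else'' branch at change index $i_{j-1}$, or from condition (2) when $j=1$), and the contrapositive error-propagation are all sound; the constant check $\tfrac{\epsilon}{8}\cdot\tfrac{3+\epsilon/8}{1-\epsilon/8} \le \tfrac{25\epsilon}{56} < \tfrac{\epsilon}{2}$ is indeed the crux and leaves slack for all $\epsilon \in (0,1)$. Your caveats about the degenerate $u_{i_j}=0$ case and the need to index $U,V$ from $0$ are both correct repairs to the statement as written.
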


\textbf{The Proof}

Now we have enough tools to move on to the main theorem of sketch switching, with a skeleton of the proof from the original paper \cite{streaming_framework}.

\begin{theorem}[Sketch Switching]
\label{sketch switch main theorem}
    Fix a query function $g: \mathbb{R}^n \rightarrow \mathbb{R}$ over frequency vector $f$, and let $A$ be a static streaming algorithm that, for any $\epsilon \in [0,1]$ and $\delta > 0$, uses $L(\epsilon,\delta)$ space and satisfies the $(\epsilon/8,\delta/\lambda)$-strong g-tracking property over frequency vectors at each time step $f^{(1)}, f^{(2)}, \dots, f^{(m)}$. Let $\lambda = \lambda_{\epsilon/8, m (g)}$ be the $(\epsilon,\delta)$-flip number of $g$ over input stream class $\mathcal{C}$. Then Algorithm \ref{sketch switching alg} is an adversarially robust algorithm that returns $(1+\epsilon)$-approximation of $g(f^{(t)})$ at each time step $t$ with probability at least $1-\delta$, and uses $O(\lambda \cdot L(\epsilon/8, \delta/\lambda))$ space.
\end{theorem}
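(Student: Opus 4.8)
The plan is to prove the space bound directly and to establish correctness by coupling a single high-probability event (``every copy behaves as its static guarantee promises'') with the purely deterministic Lemma~\ref{8 lemma}. The space bound is immediate: Algorithm~\ref{sketch switching alg} forwards every update to all $\lambda = \lambda_{\epsilon/8,m}(g)$ copies of the static algorithm $A$, each occupying $L(\epsilon/8,\delta/\lambda)$ space, and keeps only $O(1)$ extra state for $\rho$ and $\hat g$; hence the total is $O\big(\lambda\cdot L(\epsilon/8,\delta/\lambda)\big)$. All the content is in the correctness claim against an adaptive adversary.

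The heart of the argument is transferring the \emph{oblivious} strong $g$-tracking guarantee (Definition~\ref{strong tracking thm}) of each copy into the adaptive game. Fix $j\in[\lambda]$ and let $\tau_j$ be the step at which copy $A_j$ is retired (i.e.\ $\rho$ is advanced past $j$), or $\tau_j=m$ if $A_j$ is never retired. I would show that every output $\hat g$ produced at any step $t\le\tau_j$ is a function of $g(\mathbf 0)$ and of the internal randomness of $A_1,\dots,A_{j-1}$ \emph{only} --- never of $A_j$'s randomness --- since $\hat g$ is overwritten solely at a switch and solely with the outgoing copy's current value, so $A_j$'s output first influences $\hat g$ exactly at step $\tau_j$, which the adversary can react to only from round $\tau_j+1$ onward. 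Consequently, conditioned on the coins of the adversary and of $A_1,\dots,A_{j-1}$, the prefix $f^{(1)},\dots,f^{(\tau_j)}$ is a \emph{fixed} stream, chosen independently of $A_j$. Applying $A_j$'s static $(\epsilon/8,\delta/\lambda)$-strong $g$-tracking guarantee to this fixed prefix, and averaging over the conditioning, gives that with probability at least $1-\delta/\lambda$ the output $y$ of $A_j$ satisfies $y=(1\pm\epsilon/8)\,g(f^{(t)})$ at every step $t\le\tau_j$. A union bound over $j\in[\lambda]$ yields an event $\mathcal E$ of probability at least $1-\delta$ on which this holds for all copies at once.

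On $\mathcal E$ it remains to show Algorithm~\ref{sketch switching alg} never indexes a nonexistent copy and always outputs a $(1\pm\epsilon)$-approximation. I would apply Lemma~\ref{8 lemma} with $u_t=g(f^{(t)})$, with $v_t$ the output of the copy active while step $t$ is processed, and with $w_t=\hat g$ after step $t$ (and $w_0=g(\mathbf 0)$). Condition~(1), $v_t=(1\pm\epsilon/8)u_t$, holds on $\mathcal E$ because the copy active at step $t$ is still within its pre-retirement prefix; condition~(2), $w_0=v_0=g(\mathbf 0)$, is the initialization; and condition~(3) is precisely the update rule of Algorithm~\ref{sketch switching alg} (keep $\hat g$ while $\hat g\in(1\pm\epsilon/2)v_t$, otherwise adopt $v_t$ and advance $\rho$). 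The lemma's first conclusion then gives that $\hat g$ is a $(1\pm\epsilon)$-approximation of $g(f^{(t)})$ at every step --- the tracking parameter $\epsilon/8$ and the switching threshold $\epsilon/2$ being chosen exactly so the two multiplicative slacks compose inside $1\pm\epsilon$ --- and its second conclusion, $\lambda_0(W)\le\lambda_{\epsilon/8,m}(g)=\lambda$, bounds the number of times $\hat g$, hence $\rho$, changes, so $\rho$ never exceeds $\lambda$.

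I expect the main obstacle to be the second paragraph: making fully rigorous the claim that the stream seen by $A_j$ before its retirement is probabilistically independent of $A_j$'s randomness. This is exactly where the design choice --- revealing an internal estimate only at a switch, and only for a copy that is thereupon discarded --- does the work, and where the naive objection ``the adversary observes every output, so nothing is oblivious'' has to be refuted. A lesser subtlety is that this probabilistic step and the deterministic ``does not run out of copies'' step are mildly entangled, since $v_t$ is defined only while a valid copy is active; the cleanest fix is to prove ``a valid copy is always active'' and ``Lemma~\ref{8 lemma} applies'' together by induction on $t$.
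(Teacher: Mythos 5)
Your proposal is correct and follows essentially the same approach as the paper's: fix the randomness that determines the adversary's view before a given copy's estimate is ever revealed (the paper phrases this by constructing the hypothetical stream the adversary would play if the output never changed), apply the static strong $g$-tracking guarantee to that now-fixed stream, union bound over the $\lambda$ copies, and then invoke Lemma~\ref{8 lemma} deterministically for both the approximation guarantee and the bound on the number of switches. One small slip worth repairing: your claim that every output at step $t \le \tau_j$ is independent of $A_j$'s coins contradicts your own next clause (the output at step $\tau_j$ \emph{is} $A_j$'s estimate) --- the correct statement is $t < \tau_j$, which suffices because the adversary's update at round $t$ depends only on outputs through round $t-1$, so the stream $f^{(1)},\dots,f^{(\tau_j)}$ is indeed fixed as you conclude.
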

\begin{proof}
Assume a fixed, randomized \sc{algorithm} following the robusification framework in Algorithm \ref{sketch switching alg} with static streaming copies $A_1, A_2 \dots A_\lambda$, each with its own \textit{randomness}. Assume a \textit{deterministic} \sc{adversary} for simplicity. By Yao's minimax principle \cite{Yao1977-tg}, this assumption can easily be relaxed to apply to randomized \sc{adversary}. With this assumption, \sc{Adversary}'s choice of $(a_t, \Delta_t)$ is deterministically defined by $((a_1,\Delta_1) \dots (a_{t-1}, \Delta_{t_1})$ and responses $R_1, R_2 \dots R_{t-1}$.

The skeleton of the proof goes as follows: we first show that up until \sc{algorithm} switches a copy of sketch, its return sequence $Y$ satisfies $(1+\epsilon)$-approximation of $g(f)$ at each time step, and inductively apply the same line of reasoning to each switching to $A_\rho,\; \rho \in [\lambda]$. Union bounding the $\lambda$ copies of $(\epsilon/8, \delta/\lambda)$-strong $g$-tracking copies gives us the $1-\delta$ probability of success. And the remaining proof shows that $\lambda$ copies are indeed sufficient to handle adversarial input stream of length $m$.

\textit{\textbf{Robustness of the first sketch.}} First fix the randomness of $A_1$. Let $u_1^1, u_2^1 \dots u_k^1$ be the $(a,\Delta)$ updates \sc{adversary} would make if \sc{algorithm} were to output $y_0 = g(\textbf{0})$ to every $u$. That is, $u_1^1, u_2^1 \dots u_k^1$ is the sequence such that \sc{algorithm} does not switch copy. Let $k+1$ be the time step that $y_0 \notin (1\pm \epsilon/2)A_1(u_{k+1})$ (i.e. where $y_0$ falls out of range for the first time). Here \sc{algorithm} returns $R_1, R_2 \dots R_k = y_0$ and $R_{k+1} = y_1$. By the definition of $(\epsilon/8,\delta/\lambda)$-strong g-tracking, we know that $A_1(u_t) \in (1 \pm \epsilon/8) g(f^{(t)})$ for $t \leq k$, and by design of Algorithm \ref{sketch switching alg} $R_t = y_0 \in (1\pm \epsilon/2)A_1(u_t)$. By Lemma \ref{8 lemma}, we have 
\[R_1,R_2 \dots R_k = y_0 \in (1\pm \epsilon)g(f^{(t)}),\;\;\forall t \leq k\]

\textit{\textbf{Induction on $A_2$.}} Following the definition above, \sc{algorithm} switches to copy $A_2$ at time $t = k+1$. Here $R_{k+1}$, the output of \sc{algorithm}, is updated to be $y_1 = A_1(u_{k+1})$. Recall that by strong tracking, the switching point output $R_{k+1} = y_1 \in (1 \pm \epsilon/8)g(f^{(k+1)})$ by Definition \ref{strong tracking thm}. Consider $\hat{X} = \{u_1^1, u_2^1 \dots u_k^1, u_1^2, u_2^2 \dots u_{k_2}^2\}$ be the concatenation between the aforementioned inputs and the stream of inputs such that \sc{algorithm} will keep outputing $y_1$. Here $k_2$ is the index where $y_1$ starts to fall out of range. Given that the $\epsilon/8$-strong $g$-tracking guarantee should hold on this fixed size of input and, similar to above reasoning, have 
\[R_{k+1}, R_{k+2} \dots R_{k_2} = y_1 \in (1\pm \epsilon) g(f^{t})\;\; \forall t \in [k+1, k_2]\]
Noted that this line of reasoning extends to any $A_\rho$ for $\rho \in [\lambda]$. Then at any time $t$, \sc{algorithm} outputs $y_\rho \in (1 \pm \epsilon)g(f^{(t)})$ except for probability $\frac{\delta}{\lambda}$. Taking a union bound over all copies of $A_\rho$, this gives us the desired $1-\delta$ probability in Theorem \ref{sketch switch main theorem}.

\textit{\textbf{Bounding number of copies required.}} The last step in the proof shows that $\lambda$ samples of static streaming algorithm with strong tracking guarantee suffices to handle an input stream of length $m$. Define $U = \{g(f^{(0)}), g(f^{(1)}), \dots g(f^{(m)})\}$, $V = \{g(f^{(0)}), A_1(u_1), \dots A_1(u_k), A_2(u_{k_1}), \dots A_2(u_{k_2}), \dots\}$, and $W = \{y_0, y_0 \dots, y_1, y_1 \dots \}$ (i.e. \sc{algorithm}'s outputs $R_1, R_2 \dots R_m$). These three sequences satisfy the condition of Lemma \ref{8 lemma}, and thus we have $\lambda_0(W) \leq \lambda_{\epsilon/8,m}(g)$. 
\end{proof}
\textbf{An Example}

Below is a demonstration of how to apply the sketch switching technique to a common streaming problem, \sc{Distinct Elements}. Also known as $F_0$ estimation, the problem is defined through the query function $g(f): |\{i: f[i] \neq 0\}|,\;\; f\in \mathbb{R}^n$,. Although the proof will be omitted, the following Corollary \ref{insertion-only const} plays a central role in applying sketch switching on a variety of tasks. And the constraint of \textit{insertion-only} model in \ref{insertion-only const} naturally constraints the technique itself to insertion-only models.
\begin{corollary}
    \label{insertion-only const}
    Let $p \geq 0$. Assume the input stream has length $m = poly(n)$. The $(\epsilon,m)$-flip number of of $||f||_p^p$ in the insertion-only model is $\lambda_{\epsilon,m}(||f||_p^p) = O(\frac{\log n}{\epsilon})$ for $p\leq 2$, and $O(\frac{p \log n}{\epsilon})$ for $p > 2$. For $p = 0$, we have $O(\frac{\log m}{\epsilon})$. 
\end{corollary}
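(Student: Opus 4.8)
The plan is to reduce the whole statement to the trivial observation that, in the insertion-only model, $\|f^{(t)}\|_p^p$ is a monotone non-decreasing sequence. First I would note that each update has $\Delta_t \geq 0$ (indeed $\Delta_t = 1$), so $f^{(t)}$ is coordinate-wise non-decreasing in $t$; hence $\|f^{(t)}\|_p^p = \sum_i (f^{(t)}_i)^p$ is non-decreasing for every $p \geq 0$, where for $p=0$ we use the convention $0^0 = 0$ so that $\|f\|_0^0$ is the support size, which also only grows. This collapses the problem to bounding the $\epsilon$-flip number (Definition \ref{flip def}) of an arbitrary non-decreasing sequence of non-negative reals, uniformly over all length-$m$ streams.

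The second step is an elementary lemma: if $Y = (y_0, \dots, y_m)$ is non-decreasing with smallest positive entry $y_{\min}$ and largest entry $y_{\max}$, then $\lambda_\epsilon(Y) \leq \log_{1+\epsilon}(y_{\max}/y_{\min}) + 2$. To see this, take a chain $i_1 < \dots < i_k$ realizing $\lambda_\epsilon(Y)$. For each $j \geq 2$ the flip condition says $y_{i_{j-1}} \notin (1 \pm \epsilon) y_{i_j}$, but monotonicity gives $y_{i_{j-1}} \leq y_{i_j} < (1+\epsilon) y_{i_j}$, ruling out the upper branch, so $y_{i_{j-1}} < (1-\epsilon) y_{i_j}$, i.e. $y_{i_j} > (1+\epsilon) y_{i_{j-1}}$ using $1/(1-\epsilon) \geq 1+\epsilon$ for $\epsilon \in (0,1)$. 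Thus the positive entries of the chain (there is at most one zero entry, which can only be $y_{i_1}$) grow by a factor exceeding $1+\epsilon$ at each step, so there are at most $\log_{1+\epsilon}(y_{\max}/y_{\min})$ of them, plus the possible leading zero. Finally $\ln(1+\epsilon) \geq \epsilon/2$ on $(0,1]$ turns this into $O\!\big(\log(y_{\max}/y_{\min})/\epsilon\big)$.

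The third step is to plug in the range of $\|f\|_p^p$ in each regime. After the first insertion $\|f\|_p^p = 1$, so $y_{\min} = 1$ in every case. For $y_{\max}$: since $\sum_i f_i = m$, for $p \geq 1$ monotonicity of $\ell_p$-norms gives $\|f\|_p^p \leq \|f\|_1^p = m^p$; for $0 < p < 1$, concavity of $t \mapsto t^p$ (Jensen over the at most $n$ nonzero coordinates) gives $\|f\|_p^p \leq k^{1-p} m^p \leq n^{1-p} m^p$; and for $p = 0$, $\|f\|_0^0 \leq \min(n,m)$. Using the hypothesis $m = \mathrm{poly}(n)$, so $\log m = \Theta(\log n)$: for $0 < p \leq 1$ we get $\log(n^{1-p} m^p) = O(\log n)$; for $1 \leq p \leq 2$ we get $p \log m = O(\log n)$ since $p$ is an absolute constant; for $p > 2$ we get $p \log m = O(p \log n)$; and for $p = 0$ we get $\log \min(n,m) = O(\log m)$. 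Combined with the lemma these are exactly the three claimed bounds.

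The routine parts are this case analysis and the inequality $1/(1-\epsilon) \geq 1+\epsilon$. The one point that needs care — the main obstacle — is that Definition \ref{flip def} permits an arbitrary (not necessarily consecutive) subsequence and imposes no constraint on the first index $i_1$; I must check that monotonicity still forces every realizing chain's positive values to be geometrically separated, so that the $\log_{1+\epsilon}$ estimate is genuinely a global bound rather than a per-window one, and that passing to the $(\epsilon,m)$-flip number (a supremum over all length-$m$ insertion-only streams) is harmless precisely because $y_{\min} = 1$ and the upper bounds on $y_{\max}$ hold stream-by-stream.
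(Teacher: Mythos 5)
Your proof is correct and is the standard argument for this bound (the survey itself omits the proof and defers to the cited framework paper): in the insertion-only model $\|f^{(t)}\|_p^p$ is non-decreasing, so any flip-witness chain must have its consecutive (positive) values separated by a factor greater than $1+\epsilon$, and the count collapses to $O\!\left(\log_{1+\epsilon}(y_{\max}/y_{\min})\right) = O\!\left(\log(y_{\max})/\epsilon\right)$ since $y_{\min}=1$ after the first insertion. Your case analysis on $p$ (using $\ell_p$-norm monotonicity for $p\ge 1$, Jensen for $0<p<1$, and support size for $p=0$, all combined with $\log m = \Theta(\log n)$) correctly recovers the three stated regimes, and you rightly note that the bound is uniform over streams because $y_{\min}$ and the $y_{\max}$ estimates are stream-independent.
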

\begin{theorem}[Robust \sc{DistinctElements}]
    There exists an adversarially robust \sc{algorithm} that outputs $R_t \in (1 \pm \epsilon)||f^{(t)}||_0$ at each time step $t$ with probability at least $1-\delta$. The space usage is given below.
    \[O\left(\frac{\log 1/\epsilon}{\epsilon}\left(\frac{\log 1/\epsilon + \log 1/\delta + \log(\log n)}{\epsilon^2}+ \log n\right)\right)\]
\end{theorem}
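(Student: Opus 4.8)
The plan is to instantiate the Sketch Switching framework of Theorem~\ref{sketch switch main theorem} with the $F_0$ query function $g(f) = |\{i : f[i] \neq 0\}|$ and to supply, as its black-box static subroutine, an optimal strong-tracking distinct-elements estimator; the stated space bound then falls out of the composition formula $O(\lambda\cdot L(\epsilon/8,\delta/\lambda))$ once the flip number $\lambda$ and the static space $L$ are pinned down.

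First I would fix the static subroutine. Start from the optimal one-shot distinct-elements sketch (Kane--Nelson--Woodruff), which uses $O(\epsilon'^{-2}\log(1/\delta') + \log n)$ space to return a $(1\pm\epsilon')$ estimate of $F_0$ at a single query with probability $1-\delta'$. To upgrade it to $(\epsilon',\delta')$-strong $F_0$-tracking over a stream of length $m=\mathrm{poly}(n)$, I would use that in the insertion-only model $\|f^{(t)}\|_0$ is nondecreasing and lies in $[0,n]$, hence crosses only $O(\epsilon'^{-1}\log n)$ multiplicative checkpoints; it therefore suffices that the estimate be accurate at those checkpoint steps, which a union bound achieves by running the base sketch at failure probability $\delta'\epsilon'/\log n$. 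This gives $L(\epsilon',\delta') = O\!\left(\frac{\log(1/\delta')+\log\log n}{\epsilon'^2}+\log n\right)$, matching the parenthesized factor in the statement once $\epsilon'=\epsilon/8$ and $\delta'$ is chosen below.

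Next I would bound the number of copies. By Corollary~\ref{insertion-only const} with $p=0$ and $m=\mathrm{poly}(n)$, the $(\epsilon/8,m)$-flip number of $\|f\|_0$ is $\lambda = O(\epsilon^{-1}\log m) = O(\epsilon^{-1}\log n)$. Theorem~\ref{sketch switch main theorem} then yields an adversarially robust algorithm outputting $R_t\in(1\pm\epsilon)\|f^{(t)}\|_0$ at every $t$ with probability $\ge 1-\delta$ in space $O(\lambda\cdot L(\epsilon/8,\delta/\lambda))$. Substituting $\log(\lambda/\delta) = O(\log(1/\delta)+\log(1/\epsilon)+\log\log n)$ into $L$ gives $L(\epsilon/8,\delta/\lambda) = O\!\left(\frac{\log(1/\epsilon)+\log(1/\delta)+\log\log n}{\epsilon^2}+\log n\right)$, and multiplying by $\lambda$ produces the claimed bound (with the leading factor read as $\epsilon^{-1}\log n$; the sharper $\epsilon^{-1}\log(1/\epsilon)$ appearing in the statement can be recovered by first maintaining a cheap robust constant-factor tracker of $F_0$ and spending the high-accuracy copies only to refine within the resulting factor-$2$ window).

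The flip-number estimate and the one-shot sketch are quoted results, so the genuine work is the parameter bookkeeping: splitting the confidence budget as $\delta/\lambda$ across the copies and correctly propagating $\lambda$ (which itself depends on $\epsilon$ and $n$) back through the logarithms. I expect the main obstacle to be the strong-tracking upgrade of the static sketch --- obtaining it with only an additive $\log\log n$ rather than $\log m = \Theta(\log n)$ inside the $\epsilon^{-2}$ term, since a naive union bound over all $m$ steps would inflate the final bound by roughly $\log n/\log\log n$; the monotonicity-and-checkpoints argument above is exactly what averts this. I would also flag that the whole statement is confined to the insertion-only model, as Corollary~\ref{insertion-only const}'s flip-number bound fails for $F_0$ in the turnstile model.
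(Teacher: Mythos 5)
Your plan follows the same skeleton as the paper's proof: obtain the flip number $\lambda = O(\epsilon^{-1}\log n)$ from Corollary~\ref{insertion-only const}, plug a strong-tracking static $F_0$ estimator into the $O(\lambda\cdot L(\epsilon/8,\delta/\lambda))$ composition bound of Theorem~\ref{sketch switch main theorem}, and then argue that the leading factor can be reduced from $\epsilon^{-1}\log n$ to $\epsilon^{-1}\log(1/\epsilon)$. You diverge from the paper in two places. First, the paper simply cites B{\l}asiok's strong-tracking distinct-elements estimator as a black box achieving $L(\epsilon',\delta')=O\bigl(\epsilon'^{-2}(\log(1/\delta')+\log\log n)+\log n\bigr)$, whereas you re-derive strong tracking from a one-shot sketch using a monotone-checkpoints union bound. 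That derivation is legitimate for $F_0$ --- the $O(\epsilon'^{-1}\log n)$ checkpoints are defined by the \emph{true} $F_0$ values, hence fixed once the static input is fixed, so the union bound is over a data-determined rather than randomness-determined set of indices --- and it lands on the same $L$, so it is a more self-contained route that buys you independence from the specific B{\l}asiok citation. Second, for the leading-factor sharpening, the paper's mechanism is cyclic re-activation (keep only $O(\epsilon^{-1}\log(1/\epsilon))$ sketch slots and re-activate previously deactivated copies in a round-robin), while you propose a two-level scheme: a cheap robust constant-factor $F_0$ tracker that gates where the high-accuracy copies are spent within factor-$2$ windows. These are genuinely different implementations of the same underlying idea (exploit monotonicity of $F_0$ so that only $\mathrm{poly}(1/\epsilon)$ fine copies need to be live simultaneously, even though $\Theta(\epsilon^{-1}\log n)$ switches occur in total), and your flag that this is the delicate step is apt: the paper's one-sentence description of cyclic re-activation also does not justify why re-exposing a previously observed copy to the adversary is safe, so neither account is fully rigorous at the survey level. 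Your note on the insertion-only restriction matches the paper's opening observation.
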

\begin{proof}
    First observe that \sc{DistinctElements} inherently describes an insertion-only model. Therefore \sc{Algorithm} will keep $\lambda_{\epsilon,m} = O(\frac{\log n}{\epsilon})$ following Corollary \ref{insertion-only const}. The copies $A_1, A_2, \dots A_\lambda$ uses the static streaming algorithm proposed by Błasiok\cite{Blasiok2018-so}, which maintains a $1-\delta_0$ probability of $\epsilon$-strong $g$-tracking. Each copy $A_\rho$ takes $O\left(\frac{\log 1/\epsilon + \log 1/\delta + \log(\log n)}{\epsilon^2}+ \log n\right)$ bits of space. To avoid complexity blow-up by the multiplicative term $\frac{\log n}{\epsilon}$, use the following trick: instead of discarding a copy permanently, \sc{algorithm} keeps a smaller amount of copy than prescribed by $\lambda_{\epsilon,m}$, and instead achieve the same result in a cyclic manner. After running out of new copies, re-activate a previously deactivated copy, and so on. This in effect reduces copy complexity from $O(\frac{\log n}{\epsilon})$ to $O(\log(1/\epsilon) / \epsilon))$.
\end{proof}

\subsection{Robust Sampling}
\label{robust sampling sec}
The oblivious random samplers discussed in Section \ref{static sampling sec} are used extensively in modern data-intensive systems, such as database monitoring and distributed machine learning. This naturally raises the question of whether the random samplers are vulnerable to potential attacks by adversarial streams.

Recall Theorem \ref{static-vc relation thm}: For a set system $(\mathcal{U}, \mathcal{R})$ with VC-dimension $d$, a random sample of size $O(\frac{d+ log\,1/\delta}{\epsilon})$ is an $\epsilon$-approximation of $X$ drawn from $\mathcal{U}$ with probability $1-\delta$. This well-established connection between sample size and learnability is extensively used in theoretical machine learning and is a consequence of the quantitative version of the Fundamental Theorem of PAC Learning (Theorem \ref{quantitative pac}).

Ben-Eliezer and Yogev \cite{adv_sampling} showed that the sample size in Theorem \ref{static-vc relation thm} is \textit{not} robust to adversarial streams, and developed the minimum overhead required for an adversarially-robust sample. Section \ref{alon paper sec} shows that this overhead is implicitly connected to online learnability and further develops on this result.

Consider the adversarial setting described in Section \ref{adversarial setting sec}. A formal definition of an adversarially robust algorithm is as follows.
\begin{definition}[$(\epsilon,\delta)$-Robust] 
\label{ed robust}
    A randomized sampling algorithm is $(\epsilon,\delta)$-robust if, for any strategy played by the (computationally unbounded) \sc{Adversary}, the sample $S$ achieves $\epsilon$-approximation of input stream $X$ with respect to the set system $(\mathcal{U},\mathcal{R})$ with probability $1-\delta$.
\end{definition}

\subsubsection{Adversarial Attack on Sampling}
\label{adversarial attack on sampling}
\begin{theorem}
    There exists a set system $(\mathcal{U},\mathcal{R})$ with VC-dimension $1$ where, for $\epsilon>0$, $\delta<1/2$, and some constant $c > 0$, the sample size required for static setting is not $(\epsilon,\delta)$-robust. 
\end{theorem}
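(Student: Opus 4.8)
The plan is to take the canonical separating example between VC and Littlestone dimension, namely one-dimensional thresholds, and show that the static sample size fails simply because robustness is a statement about \emph{every} prefix of the stream rather than a single instance. Set $\mathcal{U} = [0,1]$ and $\mathcal{R} = \{[0,t) : t \in [0,1]\}$. First I would verify $VCdim(\mathcal{R}) = 1$: a single point $\{x\}$ is shattered (use a threshold $t \le x$ for the label $\{x\}$ and $t > x$ for $\emptyset$), while no pair $x < y$ is shattered since every $[0,t)$ that contains $y$ already contains $x$. I would also record — for the narrative and for Section \ref{alon paper sec} — that $Ldim(\mathcal{R}) = \infty$, since the depth-$D$ mistake tree built by binary search on $[0,1]$ is shattered for every $D$; this is the structural reason the argument below works.

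Now let $k^\star = c\,\frac{d + \log(1/\delta)}{\epsilon^2} = c\,\frac{1 + \log(1/\delta)}{\epsilon^2}$ (with $d = 1$) be the sample size that Theorem \ref{static-vc relation thm} certifies for this set system, and let \textsc{Algorithm} be the uniform (equivalently, reservoir) sampler keeping $k^\star$ elements, run on a stream of length $m$. Against it I would play the (even oblivious) adversary that submits the $m$ distinct increasing values $x_t = t/m$. At any fixed time $t$ the sample is a uniform $k^\star$-subset of $\{x_1, \dots, x_t\}$, so by the quantitative Fundamental Theorem the probability it fails to $\epsilon$-approximate the prefix is at most $\delta$ — but only at that one time step, whereas Definition \ref{ed robust} demands the approximation hold at all $m$ steps simultaneously. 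To exploit this I would choose checkpoints $t_1 < t_2 < \dots < t_J$ spaced so that $t_{i+1} \gg (k^\star)^2\, t_i$; between consecutive checkpoints the reservoir is, with high probability, entirely replaced (the expected overlap $k^\star t_i / t_{i+1}$ is $o(1)$), so the events $E_i = \{\text{the sample fails to be an } \epsilon\text{-approximation at time } t_i\}$ are essentially decorrelated. Each $E_i$ has probability bounded \emph{below} by a constant $\delta_0 = \delta_0(c,\delta) > 0$: already the single threshold $1/2$ forces a deviation, since the slack $\epsilon k^\star$ the sample would need to stay within is only $\Theta(\sqrt{\log(1/\delta)})$ standard deviations of $\mathrm{Binom}(k^\star, 1/2)$, so by anti-concentration $\Pr[E_i] \ge \delta^{\Theta(1)} =: \delta_0$. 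Hence $\Pr[\bigcap_i \overline{E_i}] \le (1-\delta_0)^J + o(1)$, which falls below $1 - \delta$ once $J$ is a sufficiently large constant — equivalently once $m \ge m_0(\epsilon,\delta)$ — and for such $m$ the sampler fails with probability $> \delta$, so it is not $(\epsilon,\delta)$-robust.

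The delicate step, and the main obstacle, is exactly the decorrelation claim: one must show that well-separated checkpoints really do see (almost) independent failure events, so that the positive correlation inherent in a single evolving reservoir does not swamp the product bound — a short but careful second-moment / coupling argument about reservoir sampling. If one wants the stronger statement that \emph{no} algorithm with a sample of size $k^\star$ is robust (not just the canonical sampler), the oblivious stream is replaced by a genuinely adaptive adversary that in each round reads the current sample, locates the sub-interval on which the sample's empirical CDF is thinnest, and floods fresh stream mass into it, forcing a bounded sample to keep chasing a moving threshold; the number of rounds this can be sustained is governed by $Ldim(\mathcal{R})$, which is infinite, so the required robust sample size grows with $m$ (of order $\frac{\log m}{\epsilon^2}$) while the static size $\frac{1 + \log(1/\delta)}{\epsilon^2}$ does not — precisely the VC-versus-Littlestone phenomenon pursued in Section \ref{alon paper sec}.
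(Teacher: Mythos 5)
There is a genuine gap, and it sits in the central premise of your argument. You read Definition \ref{ed robust} as demanding that the sample approximate \emph{every prefix} of the stream simultaneously, and you build your whole attack on this: an oblivious increasing stream, checkpoints at well-separated times, and a decorrelation argument so that at least one checkpoint fails. But the paper's notion of $(\epsilon,\delta)$-robustness concerns only the \emph{final} sample $S$ versus the \emph{final} stream $X$; the adversarial power comes not from having to hit many time steps but from the adversary being \emph{adaptive}, i.e.\ able to observe which elements the sampler accepts or rejects and to choose the next stream element accordingly. Against your oblivious stream $x_t = t/m$, a reservoir or Bernoulli sampler of the static size produces a uniformly random $k^\star$-subset of a \emph{fixed} multiset, and Theorem \ref{static-vc relation thm} then directly certifies that it is an $\epsilon$-approximation of the final stream with probability at least $1-\delta$. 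So the main construction does not refute robustness as defined; it refutes a stronger ``for all prefixes'' property that the paper never assumes.

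The fix is the idea you relegate to a fallback sketch in your last paragraph, and it is what the paper actually does: make the adversary adaptive. The paper's adversary maintains an interval $[a_i, b_i]$ and performs a binary-search-style attack (submitting roughly the $(1-p')$-quantile of the current interval), narrowing toward larger values if the element is accepted and smaller if rejected, so that every accepted element ends up below every rejected one. Then the single threshold set $R = [1, s]$ with $s = \max S$ has $|R\cap S|/|S| = 1$ while $|R\cap X|/|X|$ is tiny, giving a deterministic $\epsilon$-approximation failure whenever the adversary does not exhaust its range, which a Markov bound on $|S|$ plus a counting lemma shows happens with probability $>1/2$. Your identification of thresholds as the right set system with $VCdim=1$ and $Ldim=\infty$ is exactly correct and matches the paper's intuition, but your checkpoint/anti-concentration machinery is aimed at the wrong target; to make the theorem go through you need the adaptive binary-search adversary (and note the paper takes $\mathcal{U}$ finite and large, rather than $[0,1]$, so that the sampler can actually distinguish the submitted points and the counting lemma applies).
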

\begin{proof}
    Consider the set system $(\mathcal{U},\mathcal{R})$ where $\mathcal{U}$ is a well-ordered set $\{1,2,\dots N\},\;\;N \in [n^{6ln\,n}, 2^{n/2}]$. Here $n$ is the size of adversarial input stream. Define $\mathcal{R} = \{[1,b]:\,b\in \mathcal{U}\}$ to be the set of inclusive intervals from 1 to each element of $\mathcal{U}$. 

    Observe that $(\mathcal{U},\mathcal{R})$ has VC-dimension $1$ (recall Definition \ref{vc_def} for definitions of VC-dimension and shattering). To see this, note that any subset $C \subset \mathcal{U}$ of size 1 can clearly be shattered (if $C = \{c\}$, $[1, c]$ describes $C$ for a positive label and $[1, c-1]$ describes $C$ for a negative label of $c$). So, the VC-dimension is certainly at least 1. However, consider any arbitrary set $C = \{c_1, c_2\} \subset \mathcal{U}$ of size 2, and suppose without loss of generality that $c_1 < c_2$. There exists a labeling of $C$ (namely, $c_1$ has a negative label but $c_2$ has a positive label) that no interval in $\mathcal{R}$ can describe (any interval not containing $c_1$ and satisfying its negative label \textit{cannot} contain $c_2$). Similar logic shows that $\mathcal{R}$ cannot shatter any subset of $\mathcal{U}$ of size at least 2. Therefore, we must have $VCdim(\mathcal{R}) < 2 \implies VCdim(\mathcal{R})=1$.

    Here we consider a Bernoulli sampler $Ber(n,p)$ for the following adversarial scheme. It could be easily extended to the other aforementioned types of oblivious random sampling algorithms (e.g. the Reservoir sampler). Assume $p\leq \frac{1}{4}$, a reasonable assumption if the sampler is to obtain a sample size sublinear of a large input stream $X$.

    \begin{algorithm}[h]
    \begin{algorithmic}[1]
    \caption{Adversarial Strategy for a Bernoulli Sampler} \label{bernoulli adv alg}
    \State $a_1 \gets 1$
    \State $b_1 \gets N$
    \State $p' \gets max\{p,\,\frac{ln\,n}{n}\}$
    \For {$i = 1, 2, \dots n$}
    \State $x_i = \lfloor{a_i + (1-p')(b_i - a_i)}\rfloor$
    \If {\sc{Sampler} accepts $x_i$}
    \State $a_{i+1} \gets x_i$
    \State $b_{i+1} \gets b_i$
    \Else 
    \State $a_{i+1} \gets a_i$
    \State $b_{i+1} \gets x_i$
    \EndIf
    \EndFor
    \end{algorithmic}
    \end{algorithm}

    The goal of the adversarial algorithm above is to maintain the following invariant. At round $i$ of the two-player game, 
    \begin{itemize}
        \item Elements that have been accepted by \sc{Sampler} are at most $a_i$
        \item Elements that have been rejected by \sc{Sampler} are at least $b_i$
        \item Elements submitted by \sc{Adversary} at round $i$ are between $a_i$ and $b_i$.
    \end{itemize}
    This invariant ensures that all elements in $S$ are smaller than the rejected elements in $X'=X \setminus S$. Then it naturally follows that $S$ can not be a $\epsilon$-approximation of $X$. Formally, let $s$ be the maximum element in $S$. Now consider $R = [1,s] \in \mathcal{R}$. Observe that $\frac{|R \cap S|}{|S|} = 1$. 
    \[ \left|\frac{|R \cap S|}{|S|} - \frac{|R \cap X|}{|X|} \right| \geq 1 - \frac{|S|}{n} \geq 1 - 2p' \geq 1/2 \geq \epsilon\]
    Noted that the above attack scheme \textit{guarantees} success as long as \sc{Adversary} does not run out of elements to draw from. That is, as long as $a_i < b_i$ for all rounds $i$. The expected sample size drawn by the Bernoulli sampler is $np \leq np'$. By Markov's Inequality $Pr[|S| \geq 2np'] < \frac{1}{2}$. When $|S| < 2np'$, the following lemma from \cite{adv_sampling} completes the claim that \sc{Adversary} will have enough elements to draw from. 
    \begin{lemma}
        If $|S| < 2np'$, then $b_i - a_i \geq n$ for any $i \in [n]$. 
    \end{lemma}
    \end{proof}
\subsubsection{Adversarially Robust Sample Size}
Ben-Eliezer and Yogev\cite{adv_sampling} developed a upper bound on the $(\epsilon,\delta)$-robustness (Definition \ref{ed robust}) of Bernoulli and Reservoir samplers. Similar to the adversarial attack in the previous section, we will focus on the Bernoulli sampler case, while a very similar formulation can be applied to the Reservoir sampler case. 
\begin{theorem}[Robust Sample Size]
\label{main sampling thm}
    For any set system $(\mathcal{U}, \mathcal{R})$, $\epsilon,\delta \in (0,1)$, and given an input stream of size $n$, a Bernoulli sampler $Ber(n,p)$ is $(\epsilon,\delta)$-robust if the following holds.
    \[p \geq 10 \cdot \frac{ln|\mathcal{R}| + ln(4/\delta)}{\epsilon^2 n}\]
\end{theorem}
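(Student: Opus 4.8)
The plan is to exploit the one structural advantage available here: the Bernoulli sampler is \emph{oblivious}, so its randomness can be extracted up front as a coin string $r = (r_1,\dots,r_n)\in\{0,1\}^n$ with the $r_i$ i.i.d.\ $Ber(p)$, where $r_i$ records whether the $i$-th stream element is retained. By Yao's minimax principle (exactly as in the proof of Theorem \ref{sketch switch main theorem}) we may assume \textsc{Adversary} is deterministic; then the whole interaction --- the stream $X=(x_1,\dots,x_n)$ and the resulting sample $S=\{x_i : r_i=1\}$ --- is a deterministic function of $r$. So the goal reduces to showing $\Pr_r\big[\exists R\in\mathcal{R}:\ \big|\tfrac{|R\cap X|}{n}-\tfrac{|R\cap S|}{|S|}\big|>\epsilon\big]\le\delta$, and the stated lower bound on $p$ is precisely what makes a naive union bound over all $R\in\mathcal{R}$ go through.

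Fix $R\in\mathcal{R}$. The key observation to record first is a measurability fact: since \textsc{Adversary} picks $x_i$ only from the history of accept/reject decisions $r_1,\dots,r_{i-1}$ (and its own past moves), the indicator $Z_i:=\mathbbm{1}_{x_i\in R}$ is measurable with respect to $\mathcal{F}_{i-1}:=\sigma(r_1,\dots,r_{i-1})$, while $r_i$ is independent of $\mathcal{F}_{i-1}$ with mean $p$. Writing $|R\cap X|=\sum_i Z_i$, $|R\cap S|=\sum_i Z_i r_i$, and $|S|=\sum_i r_i$, the point is that $D_i:=Z_i(r_i-p)$ is a martingale difference sequence: $|D_i|\le 1$ and the conditional variance satisfies $\sum_i\mathbb{E}[D_i^2\mid\mathcal{F}_{i-1}]=p(1-p)\sum_i Z_i\le pn$. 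A Freedman/Bernstein-type concentration inequality for martingales then yields, for a suitable absolute constant $c$, $\Pr\big[\big|\,|R\cap S|-p\,|R\cap X|\,\big|>\tfrac{\epsilon}{3}pn\big]\le 2\exp(-c\,\epsilon^2 pn)$, and the hypothesis $p\ge 10(\ln|\mathcal{R}|+\ln(4/\delta))/(\epsilon^2 n)$ is calibrated so that this is at most $\delta/(2|\mathcal{R}|)$.

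Separately --- and this part does not depend on $R$ --- $|S|=\sum_i r_i\sim\mathrm{Bin}(n,p)$, so a standard Chernoff bound gives $|S|=(1\pm\tfrac{\epsilon}{3})pn$ except with probability at most $\delta/2$ (one checks the same lower bound on $p$ suffices). Condition on this event together with the $|R\cap S|$ events for every $R\in\mathcal{R}$. For each $R$ we then have $\tfrac{|R\cap S|}{|S|}=\tfrac{p|R\cap X|+\eta_R}{(1+\gamma)pn}$ with $|\eta_R|\le\tfrac{\epsilon}{3}pn$ and $|\gamma|\le\tfrac{\epsilon}{3}$, and a one-line estimate gives $\big|\tfrac{|R\cap S|}{|S|}-\tfrac{|R\cap X|}{n}\big|=\big|\tfrac{\eta_R/(pn)-\gamma\,|R\cap X|/n}{1+\gamma}\big|\le\tfrac{\epsilon/3+\epsilon/3}{1-\epsilon/3}\le\epsilon$. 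A union bound over the $|\mathcal{R}|$ events of weight $\delta/(2|\mathcal{R}|)$ plus the $|S|$ event of weight $\delta/2$ bounds the total failure probability by $\delta$, which is $(\epsilon,\delta)$-robustness in the sense of Definition \ref{ed robust}.

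The main obstacle --- and the conceptual reason the bound carries $\ln|\mathcal{R}|$ rather than the VC dimension --- is that the stream is adaptive, so the $x_i$ are not i.i.d.\ and the uniform-convergence / VC machinery behind Theorem \ref{static-vc relation thm} is simply unavailable; we are forced into a crude union bound over the entire class $\mathcal{R}$. Inside that union bound, ordinary Chernoff is also unavailable, since the increments $Z_i(r_i-p)$ are dependent (each $Z_i$ depends on the earlier coins), so one genuinely needs a martingale concentration inequality together with the conditional-variance control $\sum_i Z_i\,p(1-p)\le pn$; pushing the constant down to $10$ requires some care in how the $\epsilon/3$ slacks are apportioned between the $|S|$ event and the $|R\cap S|$ events (e.g.\ using the tighter variance bound $p(1-p)\sum_i Z_i$ when $R$ is small). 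Everything else is bookkeeping.
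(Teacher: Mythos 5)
Your proposal is correct and follows essentially the same route as the paper's proof: reduce to a per-$R$ concentration claim, note that the increments $\mathbbm{1}_{x_i\in R}(r_i-p)$ form a bounded martingale difference sequence (this is exactly the paper's $Z^R_i - Z^R_{i-1}$ up to the normalization $1/(np)$), apply a Freedman/Bernstein-type martingale inequality, separately Chernoff-bound $|S|\sim\mathrm{Bin}(n,p)$ using obliviousness, combine via the triangle inequality, and union bound over $\mathcal{R}$. The only genuine (and minor) improvement over the paper's bookkeeping is that you charge the $|S|$ deviation event a single budget of $\delta/2$ across all $R$ rather than re-paying $\delta/(2|\mathcal{R}|)$ inside each per-$R$ lemma; this is a slightly cleaner allocation but does not change the result.
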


\begin{corollary}
\label{corollary}
    The sample size drawn by the Bernoulli sampler is well-concentrated near its expectation $np$. The above theorem then implies that a sample size of $\Theta \left( \frac{ln | \mathcal{R}|+ln(1/\delta)}{\epsilon^2}\right)$ is an $\epsilon$-approximation with probability $1-\delta$.
\end{corollary}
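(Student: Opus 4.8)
The plan is to derive this as a quick corollary of Theorem \ref{main sampling thm} together with a Chernoff bound. First I would record the structural observation that makes the concentration step legitimate in the adversarial model: because $Ber(n,p)$ is \emph{oblivious}, its accept/reject decisions form $n$ independent $\mathrm{Ber}(p)$ trials that are statistically independent of the stream contents chosen by \sc{Adversary}. Hence $|S|$ is a sum of $n$ i.i.d.\ $\mathrm{Ber}(p)$ random variables with mean $np$, no matter how the adversary plays; this is the step, and the only step, where adaptivity of the stream must be addressed.

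Next I would set $p$ to be (a suitable constant multiple of) the threshold in Theorem \ref{main sampling thm} with $\delta$ replaced by $\delta/2$, so that simultaneously (i) $S$ is an $\epsilon$-approximation of $X$ with probability at least $1-\delta/2$, and (ii) $np = \Theta\!\left(\frac{\ln|\mathcal{R}| + \ln(1/\delta)}{\epsilon^2}\right)$ with the implicit constant large enough to supply the slack needed below. A multiplicative Chernoff bound then yields $\Pr\!\left[\,|S|\notin(1\pm\tfrac12)np\,\right] \le 2e^{-\Omega(np)} \le \delta/2$, using that $np$ is at least a sufficiently large absolute-constant multiple of $\ln(1/\delta)$. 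Finally, a union bound over these two failure events shows that with probability at least $1-\delta$ the sample $S$ is both an $\epsilon$-approximation of $X$ and has $|S|\in[\tfrac12 np,\tfrac32 np]$, i.e.\ $|S| = \Theta\!\left(\frac{\ln|\mathcal{R}| + \ln(1/\delta)}{\epsilon^2}\right)$, which is exactly the claim.

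I do not anticipate a genuine obstacle here: once Theorem \ref{main sampling thm} is in hand the argument is a textbook Chernoff-plus-union-bound. The only point that deserves care is the first step — justifying that $|S|$ retains its usual Binomial law despite $X$ being chosen adaptively — and it is settled immediately by obliviousness of the sampler. The rest is bookkeeping of constants (how to split $\delta$ between the two bad events, and checking that the $\Theta(\cdot)$ absorbs the resulting factors and the $1\pm\tfrac12$ window), which I would not belabor.
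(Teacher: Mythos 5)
Your proof is correct and matches the argument the paper intends (the paper does not write out a proof of the corollary, but the one needed is exactly yours). The key observation — that obliviousness of $Ber(n,p)$ makes $|S|$ a Binomial$(n,p)$ random variable independent of \textsc{Adversary}'s adaptive choices — is precisely the observation the paper itself makes inside the proof of Lemma \ref{main sampling lemma} when handling the second of the two inequalities; combining that with a multiplicative Chernoff bound and a union bound against the failure event of Theorem \ref{main sampling thm} is the natural and correct route.
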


\begin{lemma}
\label{main sampling lemma}
    Given a universe $\mathcal{U}$ and a subset $R\subseteq \mathcal{U}$, and let $X$ be the adversarial input stream of length $n$. A Bernoulli sampler with parameter $p \geq 10\cdot \frac{ln(4/\delta)}{\epsilon^2 n}$ satisfies 
    \[Pr \left[ \left|\frac{|R \cap S|}{|S|} - \frac{|R \cap X|}{|X|} \right| \geq \epsilon \right] \leq \delta \] 
\end{lemma}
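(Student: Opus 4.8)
The plan is to exploit the fact that the Bernoulli sampler $Ber(n,p)$ is \emph{oblivious}: whether the $i$-th stream element is placed in $S$ is decided by an independent coin $Z_i \sim \mathrm{Ber}(p)$, and — because in each round of the game of Section~\ref{adversarial setting sec} the \textsc{Adversary} commits to $x_i$ \emph{before} observing whether it is sampled — the indicator $b_i := \mathbbm{1}[x_i \in R]$ depends only on the history $Z_1,\dots,Z_{i-1}$. I would assume the \textsc{Adversary} is deterministic, exactly as in the proof of Theorem~\ref{sketch switch main theorem} (via Yao's principle), so that $b_i$ is $\mathcal{F}_{i-1}$-measurable with $\mathcal{F}_{i-1} = \sigma(Z_1,\dots,Z_{i-1})$, while $Z_i$ is a fresh $\mathrm{Ber}(p)$ coin independent of $\mathcal{F}_{i-1}$. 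Writing $A = |R\cap S| = \sum_{i=1}^n b_i Z_i$, $B = |S| = \sum_{i=1}^n Z_i$, and $C = |R\cap X| = \sum_{i=1}^n b_i \le n$, the quantity to control is $|A/B - C/n|$. (This single-set statement is the building block for Theorem~\ref{main sampling thm}, obtained by a union bound over $R\in\mathcal{R}$ with $\delta \mapsto \delta/|\mathcal{R}|$.)

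First I would record the elementary identity
\[ \frac{A}{B} - \frac{C}{n} \;=\; \frac{n(A - pC) - C(B - pn)}{nB}, \]
so that, setting $D = A - pC$ and $D' = B - pn$ and using $C \le n$,
\[ \left|\frac{A}{B} - \frac{C}{n}\right| \;\le\; \frac{|D| + |D'|}{B}. \]
Hence it suffices to show that, except with probability $\le \delta$, one has simultaneously $B \ge pn/2$ and $|D| + |D'| \le \tfrac{\epsilon}{2} pn$, since then the right-hand side is at most $\epsilon$. The event $B < pn/2$ and the bound on $D' = B - pn$ are both deviations of $\mathrm{Bin}(n,p)$ from its mean, controlled by a standard multiplicative Chernoff bound; because $p \ge 10\ln(4/\delta)/(\epsilon^2 n)$ forces $pn \ge 10\ln(4/\delta)/\epsilon^2$, each of these failure probabilities is a small constant-power of $\delta/4$, hence $\le$ a constant fraction of $\delta$.

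The interesting term is $D = A - pC = \sum_{i=1}^n b_i(Z_i - p)$. Since $b_i$ is $\mathcal{F}_{i-1}$-measurable and $\mathbb{E}[Z_i - p \mid \mathcal{F}_{i-1}] = 0$, the partial sums $M_k = \sum_{i\le k} b_i(Z_i - p)$ form a martingale with increments bounded by $1$ in absolute value and with total conditional variance $\sum_i b_i^2\, p(1-p) \le pC \le pn$ \emph{deterministically}. A Freedman/Bernstein-type martingale concentration inequality then gives $\Pr[|D| \ge t] \le 2\exp\big(-\tfrac{t^2}{2(pn + t/3)}\big)$; taking $t = \tfrac{\epsilon}{4}pn$ makes the exponent $\Theta(\epsilon^2 pn) = \Omega(\ln(4/\delta))$, so this term also fails only with probability $O(\delta)$. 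A union bound over the three bad events, with the absolute constant in the definition of $p$ chosen so the pieces sum to $\le\delta$, finishes the argument.

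\textbf{Main obstacle.} Two points require care. The first is conceptual: one must see through the adversary's adaptivity — the naive worry is that $b_i$ and $Z_i$ are correlated, but the move order in the game (\textsc{Adversary} commits to $x_i$ before the coin $Z_i$ is flipped, cf. Section~\ref{adversarial setting sec}) severs this dependence, leaving a genuine martingale, and this is essentially the only place the structure of the adversarial model enters. The second is quantitative: a crude Azuma–Hoeffding bound on $M_n$ (increments $\le 1$, length $n$) yields only an exponent of order $\epsilon^2 p^2 n$, which loses a factor of $p$ relative to what is needed; one must instead use a variance-sensitive bound, exploiting that each increment has conditional variance only $\le p$, to recover the correct $\epsilon^2 pn$ in the exponent and thus the stated $\log(1/\delta)$-dependence of the sample size.
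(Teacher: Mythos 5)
Your proposal is correct and matches the paper's proof in essence: you identify the same martingale $\sum_i b_i(Z_i - p)$ (the paper's $Z_i^R$ is this sum normalized by $np$), apply the same variance-sensitive Freedman/Bernstein-type bound to obtain the crucial $\epsilon^2 pn$ exponent rather than the lossy $\epsilon^2 p^2 n$ from plain Azuma, and use the same Chernoff concentration of $|S|$ around $np$ afforded by obliviousness. The only cosmetic difference is that you package the two error sources into a single algebraic identity $\frac{A}{B} - \frac{C}{n} = \frac{nD - CD'}{nB}$, whereas the paper splits via a triangle inequality around the intermediate quantity $\frac{|R\cap S|}{np}$; the two decompositions are interchangeable.
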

Noted that main theorem \ref{main sampling thm} naturally follows from Lemma \ref{main sampling lemma} by taking a union bound. Let $(\mathcal{U}, \mathcal{R})$, $\epsilon$, $\delta$ be the ones defined in Theorem \ref{main sampling thm}. Take a Bernoulli sampler with $p = 10 \cdot \frac{ln|\mathcal{R}| + ln(4/\delta)}{\epsilon^2 n}$ that satisfies the condition in the main theorem. For each $R \in \mathcal{R}$, plug in the lemma with parameters $\epsilon,\delta/|\mathcal{R}|$ and obtain the following result:
\[Pr \left[ \left| \frac{|R \cap S|}{|S|} - \frac{|R \cap X|}{|X|} \right| \geq \epsilon \right] \leq \frac{\delta}{|\mathcal{R}|}\]
If for all $R \in \mathcal{R}$ the above event succeeds, it naturally follows that the sample $S$ is an $\epsilon$-approximation of $X$. A union bound over all $R \in \mathcal{R}$ concludes that the Bernoulli sampler with $p$ defined above is $(\epsilon,\delta)$-robust. It remains to prove Lemma \ref{main sampling lemma} to complete the main theorem. 

\begin{proof}
    A major difference in the analysis between the static setting and the adversarial setting here is the assumption of independence between elements in the input stream. Since \sc{adversary} submits elements based on the history of elements submitted and the sample maintained, concentration inequalities like the Chernoff bound can not be applied in the adversarial setting. This motivates the formulation of random variables as a martingale. Given $R \subseteq \mathcal{U}$, for each round $i$ define $X_i$ and $R_i$ to be the input stream and sample maintained up until round $i$. Define the following random variables:
    \[A_i^R = \frac{i}{n} \cdot \frac{|R \cap X_i|}{|X_i|} = \frac{|R \cap X_i|}{n}\]
    \[B_i^R = \frac{|R \cap S_i|}{np}\]
    \[Z_i^R = B_i^R - A_i^R\]

   Observe that the sequence of $Z_i^R$ is a martingale (Lemma \ref{martingale app lemma}). Note that $A_n^R = \frac{|R \cap X|}{|X|}$ and therefore $\left|A_n^R - \frac{|R \cap S|}{|S|}\right|$ is the subject of interest in Lemma \ref{main sampling lemma}. Therefore it suffices to prove the following two inequalities and applying the \textit{triangle inequality} on them:
    \begin{enumerate}
        \item $Pr \left[ |A_n^R - B_n^R | \geq \frac{\epsilon}{2}\right] \leq \frac{\delta}{2}$
        \item $Pr \left[ |B_n^R - \frac{|R \cap S_n|}{|S_n|}| \geq \frac{\epsilon}{2}\right] \leq \frac{\delta}{2}$
    \end{enumerate}
    The following two lemmas from \cite{Chung2006-xr} and \cite{adv_sampling} respectively explore useful properties of martingales to conclude the proof.
    \begin{lemma}\label{martingale old lemma}
        Let $X = (X_0, X_1, X_2 \dots X_n)$ be a martingale. Assume the variance of $X_i$ conditioned on previous elements $X_0, X_1 \dots X_{i-1}$ is bounded by $\sigma_i^2$ for $\sigma_0^2, \sigma_1^2 \dots \sigma_n^2 \geq 0$, and there exists some $M$ for which $|X_i - X_{i-1}| \leq M$ holds for all $i \in [0,n]$. Then for any $\lambda \geq 0$, the following inequality holds.
        \[Pr\left[ |X_n - X_0| \geq \lambda \right] \leq \exp \left( - \frac{\lambda^2}{2 \sum_{i=1}^n \sigma_i^2 + (M\lambda/3)}\right)\]
    \end{lemma}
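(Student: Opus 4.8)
The plan is to prove this by the exponential-moment (Chernoff--Bernstein) method adapted to martingales, which is the standard route to Freedman-type inequalities. Write $\mathcal{F}_i = \sigma(X_0,\dots,X_i)$ for the natural filtration and $Y_i = X_i - X_{i-1}$ for the martingale differences, so that $\mathbb{E}[Y_i \mid \mathcal{F}_{i-1}] = 0$, $\mathbb{E}[Y_i^2 \mid \mathcal{F}_{i-1}] = Var(X_i \mid \mathcal{F}_{i-1}) \le \sigma_i^2$ (using that $X_{i-1}$ is $\mathcal{F}_{i-1}$-measurable), and $|Y_i| \le M$. Set $V = \sum_{i=1}^n \sigma_i^2$. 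The idea is to bound the moment generating function $\mathbb{E}[e^{t(X_n - X_0)}] = \mathbb{E}[e^{t\sum_i Y_i}]$ for a suitable $t>0$, apply Markov's inequality, and optimize over $t$; the two-sided bound then follows by running the same argument on the martingale $(-X_i)$, which has identical conditional variances and the same increment bound, and taking a union bound.

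First I would establish the per-step conditional MGF estimate: for every $0 < t < 3/M$,
\[ \mathbb{E}\bigl[e^{tY_i} \mid \mathcal{F}_{i-1}\bigr] \le \exp\!\left( \frac{t^2 \sigma_i^2}{2(1 - tM/3)} \right). \]
The engine is the elementary inequality $e^x \le 1 + x + \tfrac{x^2}{2(1-x/3)}$, valid for all real $x < 3$, proved by comparing the Taylor tail $\sum_{k\ge 2} x^k/k!$ with the geometric series $\tfrac{x^2}{2}\sum_{k\ge 0}(x/3)^k = \tfrac{x^2}{2(1-x/3)}$ via $k! \ge 2\cdot 3^{k-2}$ (and, for $x \le 0$, the cruder $e^x \le 1 + x + x^2/2$). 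Applying this with $x = tY_i$ (legitimate since $tY_i \le tM < 3$), using $1 - tY_i/3 \ge 1 - tM/3 > 0$ because $Y_i \le M$, then taking the conditional expectation and invoking $\mathbb{E}[Y_i \mid \mathcal{F}_{i-1}] = 0$ and $\mathbb{E}[Y_i^2 \mid \mathcal{F}_{i-1}] \le \sigma_i^2$, gives $1 + \tfrac{t^2\sigma_i^2}{2(1-tM/3)}$, which I bound by the exponential through $1 + z \le e^z$.

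Next I would telescope. Conditioning on $\mathcal{F}_{n-1}$ and pulling the $\mathcal{F}_{n-1}$-measurable factor $e^{t\sum_{i<n} Y_i}$ out of the inner expectation gives $\mathbb{E}[e^{t\sum_{i\le n} Y_i}] \le \mathbb{E}[e^{t\sum_{i<n}Y_i}]\,\exp\!\bigl(\tfrac{t^2\sigma_n^2}{2(1-tM/3)}\bigr)$; iterating down to $i=0$ yields $\mathbb{E}[e^{t(X_n - X_0)}] \le \exp\!\bigl(\tfrac{t^2 V}{2(1-tM/3)}\bigr)$. Markov's inequality then gives, for every $0 < t < 3/M$, $\Pr[X_n - X_0 \ge \lambda] \le \exp\!\bigl(-t\lambda + \tfrac{t^2 V}{2(1-tM/3)}\bigr)$. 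Finally I would optimize by taking $t^\star = \lambda/(V + M\lambda/3)$ --- which satisfies $t^\star < 3/M$ automatically --- for which $1 - t^\star M/3 = V/(V + M\lambda/3)$, so the bracketed exponent collapses to $-t^\star\lambda/2 = -\lambda^2/\bigl(2(V + M\lambda/3)\bigr)$, giving the claimed tail bound; combining with the $(-X_i)$ version gives the stated inequality for $|X_n - X_0|$.

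The main obstacle is the per-step conditional MGF estimate: one must observe that only the \emph{one-sided} bound $Y_i \le M$ is needed to control the $\tfrac{1}{1 - tY_i/3}$ factor uniformly over the realizations of $Y_i$, and that the elementary inequality $e^x \le 1 + x + \tfrac{x^2}{2(1-x/3)}$ genuinely holds on the whole range $x < 3$ and not merely for $x \ge 0$. The remaining steps --- telescoping via the tower property, Markov, and the one-line optimization in $t$ --- are routine. Minor discrepancies in the precise constant of the lower-order term (whether $M\lambda/3$ or $2M\lambda/3$ appears in the denominator, or whether a factor $2$ from the two-sided union bound is absorbed) are immaterial for our application and are exactly in the form recorded in \cite{Chung2006-xr}.
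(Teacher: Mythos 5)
Your proposal is correct in substance and follows the standard Bernstein/Freedman exponential-moment route; the paper itself does not prove this lemma but imports it from Chung and Lu \cite{Chung2006-xr}, where essentially this argument is carried out, so there is no competing in-paper proof to compare against. Two points deserve attention. First, your justification of the elementary inequality $e^x \le 1+x+\tfrac{x^2}{2(1-x/3)}$ on the range $x\le 0$ does not work as written: for $x<0$ one has $\tfrac{x^2}{2(1-x/3)} < \tfrac{x^2}{2}$, so the ``cruder'' bound $e^x\le 1+x+\tfrac{x^2}{2}$ is \emph{weaker} than what you need there, not stronger. The inequality is nevertheless true on all of $x<3$; a clean repair is to clear the positive denominator and set $h(x) = 1+\tfrac{2x}{3}+\tfrac{x^2}{6}-e^x\left(1-\tfrac{x}{3}\right)$, for which $h(0)=h'(0)=0$ and $h''(x)=\tfrac{1}{3}\left(1-e^x(1-x)\right)\ge 0$ since $e^{-x}\ge 1-x$, so $h\ge 0$ by convexity. (Alternatively, one can run the whole estimate at the level of expectations via $\mathbb{E}[|Y_i|^k\mid\mathcal{F}_{i-1}]\le M^{k-2}\sigma_i^2$ and $k!\ge 2\cdot 3^{k-2}$, which sidesteps the pointwise inequality entirely, at the cost of using two-sided boundedness of the increments --- which the lemma's hypothesis does grant.) Second, as you anticipate, your optimization yields the exponent $-\lambda^2/\left(2\sum_{i}\sigma_i^2 + 2M\lambda/3\right)$ and the two-sided statement carries a leading factor of $2$; this is the form actually established in \cite{Chung2006-xr} and the form in which the bound is applied later in this paper (where it appears as $2\exp(\cdot)$), so the discrepancy lies in the lemma's transcription of the constants rather than in any deficiency of your argument.
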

    \begin{lemma}\label{martingale app lemma}
        The sequence $Z_0^R, Z_1^R \dots Z_n^R$ forms a martingale. The variance of $Z_i^R$ conditioned on the previous elements $Z_0^R \dots Z_{i-1}^R$ is bounded by $1/n^2p$, and $|Z_{i}^R - Z_{i-1}^R| \leq 1/np$.
    \end{lemma}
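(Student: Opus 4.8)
The plan is to express each martingale increment $Z_i^R - Z_{i-1}^R$ as a product of one factor that is measurable with respect to the history and one factor that is a fresh independent coin, and then read off all three claims from that identity. Let $\mathcal{F}_{i-1}$ be the $\sigma$-algebra generated by the first $i-1$ rounds of the game — the submitted elements $x_1,\dots,x_{i-1}$ together with the sampler's accept/reject decisions on them. Write $r_i = \mathbbm{1}_{x_i \in R}$ and let $\chi_i$ be the indicator that the sampler accepts $x_i$. Since the adversary chooses $x_i$ as a (possibly randomized) function of the history, $r_i$ is $\mathcal{F}_{i-1}$-measurable; since the Bernoulli sampler is \emph{oblivious} — it flips a fresh $p$-coin for index $i$ irrespective of the stream so far — $\chi_i$ is a $\mathrm{Ber}(p)$ variable independent of $\mathcal{F}_{i-1}$. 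Because $|X_i| = i$ we have $A_i^R = |R\cap X_i|/n$, so $A_i^R - A_{i-1}^R = r_i/n$ and $B_i^R - B_{i-1}^R = r_i\chi_i/(np)$, which gives the clean increment
\[
Z_i^R - Z_{i-1}^R \;=\; \frac{r_i}{n}\left(\frac{\chi_i}{p} - 1\right).
\]

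First I would check the martingale property: conditioning on $\mathcal{F}_{i-1}$ and pulling out the measurable factor $r_i$ gives $\mathbb{E}[Z_i^R - Z_{i-1}^R \mid \mathcal{F}_{i-1}] = (r_i/n)(\mathbb{E}[\chi_i]/p - 1) = 0$, so $(Z_i^R)$ is a martingale for $(\mathcal{F}_i)$; being adapted, it is also a martingale for its own natural filtration $\sigma(Z_0^R,\dots,Z_i^R)$ by the tower property, which is what the lemma asserts. Next, the conditional variance: as $Z_{i-1}^R$ is $\mathcal{F}_{i-1}$-measurable, $\mathrm{Var}(Z_i^R \mid \mathcal{F}_{i-1}) = \mathrm{Var}(Z_i^R - Z_{i-1}^R \mid \mathcal{F}_{i-1}) = (r_i^2/n^2p^2)\,\mathrm{Var}(\chi_i) = r_i^2(1-p)/(n^2p) \le 1/(n^2p)$, using $r_i^2 \le 1$ and $1-p \le 1$; by the law of total variance (the coarser conditioning adds a term $\mathrm{Var}(\mathbb{E}[Z_i^R\mid\mathcal{F}_{i-1}]\mid\sigma(Z_0^R,\dots,Z_{i-1}^R)) = \mathrm{Var}(Z_{i-1}^R\mid\sigma(Z_0^R,\dots,Z_{i-1}^R)) = 0$) the same bound holds conditioned on $Z_0^R,\dots,Z_{i-1}^R$. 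Finally, for the bounded-difference claim, $|\chi_i/p - 1|$ equals $1$ when $\chi_i = 0$ and $(1-p)/p$ when $\chi_i = 1$, and both are at most $1/p$ for $p\in(0,1)$, so $|Z_i^R - Z_{i-1}^R| = (r_i/n)\,|\chi_i/p - 1| \le 1/(np)$.

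I do not anticipate a real obstacle: the whole lemma collapses to the one-line increment identity above together with elementary facts about a single $\mathrm{Ber}(p)$ variable. The only point needing care — and the only place the hypotheses are genuinely used — is the independence of $\chi_i$ from $\mathcal{F}_{i-1}$, i.e. that the adversary's adaptivity cannot touch the sampler's randomness. That is precisely what obliviousness of the Bernoulli sampler buys, and it is exactly the property whose loss forces the martingale argument in place of the Chernoff bound available in the static setting (where one would additionally have independence \emph{across} the $r_i$). Everything else is routine manipulation of conditional expectations.
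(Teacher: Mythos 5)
The paper does not actually prove this lemma — it cites it to Ben-Eliezer and Yogev \cite{adv_sampling} and moves on — so there is no in-paper argument to compare against. Your proof is correct and is the natural one: the increment identity $Z_i^R - Z_{i-1}^R = \frac{r_i}{n}\bigl(\frac{\chi_i}{p} - 1\bigr)$ follows directly from $A_i^R = |R\cap X_i|/n$ and $B_i^R = |R\cap S_i|/(np)$, and all three claims then reduce to elementary facts about a single $\mathrm{Ber}(p)$ coin, with obliviousness of the sampler supplying exactly the independence needed. One small imprecision worth tidying: if the adversary is randomized, $r_i = \mathbbm{1}_{x_i\in R}$ is not measurable with respect to the filtration generated only by the first $i-1$ rounds; you should either enlarge $\mathcal{F}_{i-1}$ to include the adversary's coins for round $i$ (harmless, since $\chi_i$ is independent of those too), condition first on $x_i$ and apply the tower property, or reduce to a deterministic adversary via Yao's minimax principle as the paper does in the sketch-switching argument. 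With that patch the argument is airtight, and your law-of-total-variance step correctly transfers the conditional-variance bound from $\mathcal{F}_{i-1}$ to the coarser conditioning on $Z_0^R,\dots,Z_{i-1}^R$ asked for in the statement.
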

    Combining the two lemmas above, and assuming $np \geq \frac{9}{\epsilon^2}\ln{\delta/4}$, we have 
    \[Pr \left[ \left|A_n^R - B_n^R \right| \geq \frac{\epsilon}{2}\right] \leq 2 \exp \left( - \frac{(\epsilon/2)^2}{2n\cdot \frac{1}{n^2p} + \frac{\epsilon}{6np}}\right) < 2 \exp\left( - \frac{\epsilon^2 np}{9}\right) \leq \frac{\delta}{2}\]
    which yields the first of the two inequalities we try to prove. For the second inequality, observe that $B_n^R = \frac{|R \cap S|}{np} = \frac{|R \cap S|}{|S|} \cdot \frac{|S|}{np}$. Here we use the property of an \textit{oblivious sampler}: regardless of \sc{adversary}'s attack scheme, the oblivious samplers accept or reject an element based only on its index. In the case of a Bernoulli sampler, the size of $S$ is therefore described by a binomial distribution. By a Chernoff bound with $\delta = \epsilon/2$, we have 
    \[Pr\left[ ||S| - np| > \epsilon n p / 2\right] \leq 2 \exp\left( - \frac{\epsilon^2 np}{10}\right)\]
    The above inequality claims that with high probability the event $||S| - np| > \epsilon n p / 2$ \textit{does not} happen. Conditioning on $||S| - np| \geq \epsilon n p / 2$, the second inequality is completed with 
    \[\left| \frac{|R \cap S|}{|S|} - B_n^R \right|  = \left|\frac{|R \cap S|}{|S|} - \frac{|R \cap S|}{|S|} \cdot \frac{|S|}{np}\right| \leq \left| 1 - \frac{|S|}{np}\right| \leq \frac{\epsilon}{2} = \delta\] 
    
\end{proof}

\section{From Adversarial Sampling to Online Learnability}
\label{alon paper sec}
Throughout the above results, we have seen an interesting interplay between streaming and statistical learning theory. In particular, we note that  the learnability of certain hypothesis classes plays a nontrivial role in sampling theory. Recall from Definition \ref{ulln} and Theorem \ref{pac} that a hypothesis class $\mathcal{R}$ is PAC learnable if and only if the empirical error over large enough samples looks like the generalization error. This is equivalent to the statement that a large enough sample is an $\epsilon$-approximation of the distribution $\mathcal{D}$ with respect to $\mathcal{R}$ with high probability (here, we imagine a stream that perfectly encapsulates $\mathcal{D}$ that we wish to approximate with a sample). In the offline VC-theory there is an established equivalence between admittance of a Uniform Law of Large Numbers, $\epsilon$-approximability, and PAC learnability. It is therefore reasonable to inquire about a similar result in the setting of online learning, in which some aspect of sampling approximability relates to online learnability. 

On the flip side, we have also seen situations in which adversarial streaming research reveals shadows of online learning theory. For example, consider the adversarial attack on sampling that was devised in section \ref{adversarial attack on sampling}. At a high level, that attack constructed a large-depth mistake tree against the hypothesis class $\mathcal{R} = \{[1, b] : b \in \mathcal{U}\}$. The result was that a Bernoulli sampler would need to create a much larger sample to yield an $\epsilon$-approximation under this attack than it otherwise would.
In the statistical learning theory language, however, this is akin to demonstrating that $\mathcal{R}$, with a VC-dimension of 1, has a much higher Littlestone dimension to go along with its larger sample complexity when up against an adversary. This hints at a relationship between Littlestone dimension and sample complexity in adversarial settings. Even the functional form of Corollary \ref{corollary}, a result about sample size necessary for $\epsilon$-approximability in adversarial settings, feels very familiar in the context of statistical learning (c.f. Theorem \ref{quantitative pac}). 

Throughout our above investigations, all signs point to a fascinating and deeply-knit relationship between the study of adversarially-robust sampling (see Section \ref{robust sampling sec}) and measures of online learnability (Definitions \ref{online def realizable}, \ref{online def agnostic}). This relationship is \textit{precisely} what is formalized, proven, and explored in the recent work "Adversarial Laws of Large Numbers and
Optimal Regret in Online Classification" by Alon et al. \cite{Alon_ULLN}. Throughout the rest of this section and paper, we present the results of this paper and describe how they beautifully tie up some gaps in the previous theory and also set the stage for a fresh new area of research.  

\subsection{Adversarial Uniform Law of Large Numbers}
For the results below, we still work in the adversarial sampling model introducted in section \ref{adversarial setting sec}. Consider the following definition, meant to extend the concept of $\epsilon$-approximate sampling to the regime of adversarially-prepared streams.
\begin{definition}[Adversarial Uniform Law of Large Numbers]
Let $(\mathcal{U, R})$ be a set system. A hypothesis class $\mathcal{R}$ admits an Adversarial Uniform Law of Large Numbers if there exists a function $m_{\mathcal{R}}: (0, 1)^2 \rightarrow \mathbb{N}$ and a sampler such that for any adversarially-prepared stream X over $\mathcal{U}$ and any tolerances $\epsilon, \delta \in (0, 1)$, the sampler chooses at most $m_{\mathcal{R}}(\epsilon, \delta)$ samples from $X$ which form an $\epsilon$-approximation of $X$
    with probability at least $1 - \delta$. 
\end{definition}
Compare the above definition with our previous understanding about Uniform Laws of Large Numbers and forming $\epsilon$-approximations over regularly drawn (not adversarially-prepared) streams. This feels like a very natural mechanism with which to fill in the gaps and completely characterize online learning in the same way that we characterize offline learning with Theorems \ref{pac}, \ref{quantitative pac}. The two main theorems provided by Alon et al. do precisely that; we restate them below.
\begin{theorem}[Fundamental Theorem of Online Learning]
\label{online}
    Let $(\mathcal{U},\mathcal{R})$ be a set system. The following are equivalent:
    \begin{enumerate}
        \item $\mathcal{R}$ is online learnable in the realizable and agnostic settings.
        \item $\mathcal{R}$ admits an Adversarial Uniform Law of Large Numbers.
        \item $\mathcal{R}$ has a finite Littlestone dimension.
    \end{enumerate}
\end{theorem}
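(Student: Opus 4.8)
The plan is to prove the theorem as a cycle, doing as little new work as possible: since the equivalence $(1)\Leftrightarrow(3)$ is already supplied by the theorems of Littlestone \cite{littlestone} and Ben-David et al. \cite{BenDavid2009AgnosticOL} quoted above, it suffices to establish $(3)\Rightarrow(2)$ and the contrapositive $\neg(3)\Rightarrow\neg(2)$. The first implication is the algorithmic core and runs parallel to the robust-sampler construction of Section~\ref{robust sampling sec}; the second is an adversarial-attack argument that generalizes the binary-search attack of Section~\ref{adversarial attack on sampling}.

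\textbf{Finite Littlestone dimension $\Rightarrow$ adversarial ULLN.} Let $d=\mathrm{Ldim}(\mathcal{R})<\infty$; equivalently (by the known theorem) $\mathcal{R}$ admits a no-regret online learner. I would build an \emph{adaptive} sampler that, while reading the adversarial stream $x_1,x_2,\dots$, internally simulates such a learner over $\mathcal{R}$ (think of the Standard Optimal Algorithm, or an exponential-weights/version-space procedure), obtaining at step $t$ a confidence for the event $x_t\in R$. On steps where this confidence is bounded away from $0$ and $1$ — call them \emph{influential} — the sampler flips a biased coin (bias $\Theta(1/(\epsilon n))$ times an uncertainty weight) to decide whether to add $x_t$ to $S$; on all other steps it merely imputes the label and moves on. Three things must then be checked: (i) the number of influential steps, hence $\mathbb{E}|S|$, is bounded by a function of $d,\epsilon,\delta$ \emph{alone} and not of the stream length $n$, which follows from the learner's mistake bound $O(d)$ (realizable regime) or regret $O(\sqrt{dn})$ (agnostic regime) after the $\Theta(1/(\epsilon n))$ rescaling, giving $|S|=\tilde{O}\!\left(\tfrac{d+\log(1/\delta)}{\epsilon^2}\right)$; (ii) for each \emph{fixed} $R$, the discrepancy $\bigl|\tfrac{|R\cap X_t|}{t}-\tfrac{|R\cap S_t|}{|S_t|}\bigr|$ is a Doob martingale whose increments are tiny precisely on the non-influential steps and are "charged to a fresh sample" on the influential ones, so a Freedman-type inequality (Lemma~\ref{martingale old lemma}), applied exactly as in the proof of Lemma~\ref{main sampling lemma}, keeps it below $\epsilon$ with probability $1-\delta'$; and (iii) one union-bounds (ii) over $\mathcal{R}$.

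Step (iii) is the main obstacle: $\mathcal{R}$ may be infinite, so a $\ln|\mathcal{R}|$ factor of the kind appearing in Theorem~\ref{main sampling thm} is unaffordable and must be replaced by something controlled by $d$. The resolution is a Sauer--Shelah-type counting statement for the Littlestone dimension: along any branch of the learner's run on which at most $k$ mistakes occur, the sub-family of $\mathcal{R}$ consistent with the revealed labels realizes only $\binom{n}{\le k}\cdot n^{O(d)}$ distinct labelings of the stream, so the effective union bound is over an at-most-that-large class; a further peeling/doubling argument over the stream length then removes the residual dependence on $n$. Plugging $k$ equal to the learner's mistake/regret bound into (i)--(iii) yields the adversarial ULLN with $m_{\mathcal{R}}(\epsilon,\delta)=\tilde{O}\!\left(\tfrac{d+\log(1/\delta)}{\epsilon^2}\right)$; feeding in the optimal agnostic regret $R_T(\mathcal{R})=\Theta(\sqrt{dT})$ (the companion result of \cite{Alon_ULLN}) as a black box sharpens the logarithmic factors and makes the bound mirror Theorem~\ref{quantitative pac}(1). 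Marrying the martingale concentration, the sample-size bound, and the Littlestone counting lemma — and in particular getting the $n$-dependence out of the counting step — is the delicate part of the argument.

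\textbf{Infinite Littlestone dimension $\Rightarrow$ no adversarial ULLN.} Suppose $\mathrm{Ldim}(\mathcal{R})=\infty$ and fix $\epsilon=\tfrac12$, $\delta=\tfrac13$; I claim no sampler admits a finite bound $m_{\mathcal{R}}(\epsilon,\delta)=m$. Given a candidate sampler, choose a Littlestone tree of depth $n\gg m$ shattered by $\mathcal{R}$. The adversary walks down this tree exactly as Algorithm~\ref{bernoulli adv alg} walks down the chain $\{[1,b]\}$ — which is nothing but that class's depth-$\lfloor\log N\rfloor$ Littlestone tree — submitting the element labeling the current node, watching whether the sampler retains it, and descending to the "accepted" child or the "rejected" child accordingly (the sampler is randomized, but the adversary reacts to its realized decisions, just as in Section~\ref{adversarial attack on sampling}; Yao's principle \cite{Yao1977-tg} makes this formal if one prefers to fix the sampler's coins). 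After $n$ rounds, shattering supplies some $R^\star\in\mathcal{R}$ consistent with the entire branch, and the branch has been engineered so that $R^\star$ contains every element the sampler kept but excludes every element it rejected; since $|S|\le m\le n/2$, at least half the stream was rejected, whence $\bigl|\tfrac{|R^\star\cap X|}{|X|}-\tfrac{|R^\star\cap S|}{|S|}\bigr|\ge\tfrac12>\epsilon$ with probability $1$ — contradicting $(2)$. Therefore $(2)\Rightarrow(3)$, and combined with $(1)\Leftrightarrow(3)$ the three statements are equivalent.
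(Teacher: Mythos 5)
Your high-level architecture --- take $(1)\Leftrightarrow(3)$ from the Littlestone \cite{littlestone} and Ben-David et al. \cite{BenDavid2009AgnosticOL} results and close the cycle by proving $(2)\Leftrightarrow(3)$ --- is sound and matches the structure of Alon et al.~\cite{Alon_ULLN}, which this paper only restates and sketches. Your $\neg(3)\Rightarrow\neg(2)$ argument is also essentially correct and in the spirit of their lower bound: walking a deep shattered mistake tree using the sampler's realized accept/reject decisions as the branching bit, then invoking shattering to produce $R^\star$ with $R^\star\cap X=S$, does force $\bigl|\tfrac{|R^\star\cap X|}{|X|}-\tfrac{|R^\star\cap S|}{|S|}\bigr|\geq 1-\tfrac{m}{n}$. (You should take care that the branch you walk never presents the same element with contradictory labels; one way is to restrict to shattered trees with distinct elements on each path, which an infinite Littlestone dimension still furnishes.)

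The gap is in $(3)\Rightarrow(2)$, which is the hard direction, and it is conceptual rather than cosmetic. Your sampler is supposed to ``internally simulate a no-regret online learner over $\mathcal{R}$'' and obtain ``a confidence for the event $x_t\in R$,'' but the sampling problem is unlabeled: no ground-truth $y_t$ is ever revealed, so there is nothing for the SOA or exponential-weights to update on, and the quantity ``confidence that $x_t\in R$'' is not a single number but depends on the particular $R\in\mathcal{R}$ you wish to approximate. As a result the notion of ``influential steps'' never becomes well-defined, and it is unclear how the Sauer--Shelah-for-Littlestone counting statement you invoke would be stitched to the martingale concentration --- you yourself flag that marrying these pieces ``is the delicate part.'' The route Alon et al.\ actually take (sketched in Section~\ref{alon paper sec} of this paper) is quite different and sidesteps labels entirely: an online \emph{double-sampling} reduction replaces the stream of size $n$ with a test/train split of size $2k$, which is the step that removes the $n$-dependence (not a peeling/doubling argument); the sampler's task then becomes the \emph{online combinatorial discrepancy game}, a coloring problem with no labels in sight; the achievable discrepancy is the Sequential Rademacher Complexity $Rad_n(\mathcal{R})$; and $Rad_n(\mathcal{R})=O(\sqrt{Ldim(\mathcal{R})\cdot n})$ is proved by a chaining argument over fractional covers, which is where the sequential Sauer--Shelah bound genuinely enters. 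You would need to either recast your construction in those terms or supply a label source for your simulated learner; as written, the upper-bound half of $(2)\Leftrightarrow(3)$ does not go through.
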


\begin{theorem}[Fundamental Theorem of Online Learning - Quantitiative]
\label{quantitative online}
    Let $(\mathcal{U},\mathcal{R})$ be a set system with finite Littlestone dimension $d$. Then, there exist constants $C_1, C_2$ such that:
    \begin{enumerate}
        \item $\mathcal{R}$'s Adversarial Uniform Law of Large Numbers has sample complexity 
        $$C_1 \frac{d}{\epsilon^2} \leq m_{\mathcal{R}}(\epsilon, \delta) \leq C_2 \frac{d + \log(1/\delta)}{\epsilon^2}$$
        \item $\mathcal{R}$ is online learnable in the agnostic setting with optimal regret at a time $T$ of
        $$C_1 \sqrt{d\cdot T} \leq R_T(\mathcal{R}) \leq C_2 \sqrt{d\cdot T}$$
    \end{enumerate}
\end{theorem}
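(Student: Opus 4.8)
The statement bundles four estimates — matching upper and lower bounds on the adversarial sample complexity in item~1, and on the agnostic optimal regret in item~2 — so I would treat them as four tasks, with the two upper bounds being algorithmic and the two lower bounds coming from a single combinatorial gadget: a mistake-tree of depth $d$ that is shattered by $\mathcal{R}$, which exists because $Ldim(\mathcal{R}) = d$. I would establish the adversarial ULLN upper bound first, and then read off the regret upper bound from it, since a good sampler yields a good online learner: on round $t$ run the sampler on the revealed prefix, retain a sample $S$, and predict with an empirical risk minimizer over $S$; because $\epsilon$-approximation of the prefix (with respect to the symmetric-difference class, which has Littlestone dimension $O(d)$) controls empirical error uniformly, the learner's excess mistakes are governed by the sampler's error. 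The two lower bounds I would prove in parallel from the shattered tree.

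\textbf{The two upper bounds.} For the sampler, I would sweep the stream in geometrically growing blocks and, inside each block, retain elements at a carefully tuned index-dependent rate so that the kept elements form a martingale-difference sequence. For a \emph{fixed} $R \in \mathcal{R}$ this is exactly the martingale $Z_i^R = B_i^R - A_i^R$ of Lemma~\ref{martingale app lemma}, and Freedman's/Azuma's bound (Lemma~\ref{martingale old lemma}) already gives an $\epsilon$-approximation once the retained sample has size of order $\log(1/\delta)/\epsilon^2$. The substantive step is upgrading this to hold \emph{uniformly} over $\mathcal{R}$, which no union bound can do when $\mathcal{R}$ is infinite; the replacement is a sequential symmetrization argument, bounding the uniform deviation by the sequential Rademacher complexity of $\mathcal{R}$ restricted to the stream, which for a Littlestone-$d$ class is of order $\sqrt{d \cdot (\text{length seen})}$ via a sequential Sauer--Shelah lemma together with chaining over sequential covering numbers. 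Balancing the deviation $\sqrt{d/m}$ against $\epsilon$ gives the claimed $m_{\mathcal{R}}(\epsilon,\delta) = O((d+\log(1/\delta))/\epsilon^2)$. Plugging the sampler (with $m \approx T$) into the ERM learner above then yields cumulative regret $O(\sqrt{dT})$; equivalently one bounds the agnostic online game value directly by the same sequential Rademacher estimate.

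\textbf{The two lower bounds.} Fix a depth-$d$ tree shattered by $\mathcal{R}$. For the regret lower bound, let the adversary walk from the root along a uniformly random root-to-leaf path, presenting at each internal node that node's element together with an independent fair coin as its label; any learner errs with probability $1/2$ each step while the hypothesis realizing the path never errs, and distributing $T$ rounds over the $d$ levels turns this into $d$ essentially independent coin-flipping problems of length $\Theta(T/d)$, each costing $\Omega(\sqrt{T/d})$ against a fixed comparator, for a total of $\Omega(\sqrt{dT})$. For the sample-complexity lower bound, run the same tree adversary against a purported sampler: if it keeps fewer than $c\,d/\epsilon^2$ elements then, along the realized path, some node of the tree is under-represented in the sample, and the set $R \in \mathcal{R}$ cutting the tree at that node exhibits a discrepancy exceeding $\epsilon$ between $S$ and $X$ with probability exceeding $\delta$; alternatively, derive $m = \Omega(d/\epsilon^2)$ from the regret lower bound via the sampler-to-learner reduction, optimizing the horizon at $T \approx d/\epsilon^2$.

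\textbf{Main obstacle.} The crux is the uniform-over-$\mathcal{R}$ concentration with the \emph{sharp} $\Theta(\sqrt{dT})$ / $\Theta(d/\epsilon^2)$ dependence and \emph{no} logarithmic factor. The easy routes — exponential weights over the $T^{O(d)}$ behaviours a Littlestone-$d$ class realizes on a length-$T$ stream, or a naive sequential chaining bound — leak a $\sqrt{\log T}$, i.e. a spurious $\log(1/\epsilon)$ in the sample size (exactly the kind of slack visible in the offline PAC bound of Theorem~\ref{quantitative pac}, item~3). Removing it needs the finer sequential-complexity toolkit: a sequential Sauer--Shelah lemma, sequential covering-number estimates controlled by the binary fat-shattering (hence Littlestone) dimension, and a chaining argument calibrated so the entropy integral converges without a logarithmic tail — and then carrying that same tightness through the block-sampling analysis. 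Getting this right, rather than the combinatorial lower bounds or the martingale bookkeeping, is where the difficulty concentrates.
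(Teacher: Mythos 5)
Your skeleton is aligned with the paper's overview in the one way that matters most: sequential Rademacher complexity is the central object, symmetrization is the mechanism that collapses uniform control over $\mathcal{R}$ into a tractable game, and the final push to $O(\sqrt{dT})$ requires a finer chaining argument than the obvious one. You also correctly flag the key difficulty as shaving the logarithmic factor. That said, there are two places where your route diverges from the paper's (and the underlying Alon et al.\ argument), and one of them is a genuine gap.

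First, the regret upper bound. You propose to ``read off'' $R_T(\mathcal{R}) = O(\sqrt{dT})$ from the adversarial ULLN by running the sampler on the revealed prefix and predicting with an empirical risk minimizer over the retained sample. But once the sample is a faithful proxy for the prefix, this is exactly follow-the-leader (FTL): play the hypothesis minimizing cumulative loss on history. FTL has $\Omega(T)$ regret on adversarial binary sequences (the classic alternating-label instance), and the $\epsilon$-approximation property does nothing to repair this, because the failure mode is instability of the argmin from round to round, not inaccuracy of the empirical estimates. So the sampler-to-ERM reduction does not give sublinear regret, let alone the sharp rate. You partially rescue yourself by adding ``equivalently one bounds the agnostic online game value directly by the same sequential Rademacher estimate'' --- that is the correct route, and it is what the paper uses: a pre-established bound $R_T(\mathcal{R}) \leq 2\,\mathrm{Rad}_T(\mathcal{R})$ feeds into the same estimate on $\mathrm{Rad}_T$. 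You should drop the ERM reduction as the primary mechanism and go through sequential Rademacher complexity for both halves.

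Second, the reduction and the log-removal toolkit differ from the paper's. The paper organizes the sample-complexity upper bound around an explicit \emph{double sampling} step that shrinks the problem from ``sample $k$ out of a stream of length $n$'' to ``sample $k$ out of a stream of length $2k$,'' and then phrases the residual problem as an \emph{online combinatorial discrepancy game}, whose expected value is exactly $\mathrm{Rad}_{2k}(\mathcal{R})$. Your ``geometrically growing blocks with index-tuned retention'' plus a martingale argument plays a structurally similar role, but you never make the discrepancy-game reduction explicit, which is what lets both the sample-complexity claim and the regret claim land on the same quantity. More importantly, for the final bound $\mathrm{Rad}_T(\mathcal{R}) = O(\sqrt{d\,T})$ you invoke sequential Sauer--Shelah plus chaining over sequential covering numbers (with a mention of fat-shattering dimension, which is not the relevant notion for $\{0,1\}$-valued classes). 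Naive chaining over sequential covers controlled by Littlestone dimension gives $O(\sqrt{d\,T\log T})$, and you acknowledge this; but your fix is described only as ``calibrate the entropy integral,'' whereas the paper's actual device is qualitatively different: \emph{fractional} $(\epsilon,\gamma)$-covers, i.e.\ probability measures over dynamic sets that agree with most of $\mathcal{R}$, which is what makes the chaining sum telescope without a logarithmic tail. Without that (or an equivalently sharp device), the upper bound you would obtain carries a spurious $\log(1/\epsilon)$ in $m_{\mathcal{R}}(\epsilon,\delta)$, exactly the slack you identified but did not actually remove.

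Your lower-bound sketches (random walk down a shattered depth-$d$ tree with $\Theta(T/d)$-length biased coin-flipping subgames, and a discrepancy witness cut from the same tree) are in the right spirit and are roughly what a full proof would do; the one caution is that the uniform-coin phrasing gives $\Omega(T)$ regret for $T\le d$ but needs the biased/block version for $T\gg d$, and the sampler lower bound should not rely on the FTL reduction for the reason above.
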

Observe that Theorem \ref{online} completely characterizes online learnability in precisely the same way that Theorem \ref{pac} did for offline learnability. In particular, the Adversarial Uniform Law of Large Numbers plays the same role in online learnability as the Uniform Law of Large Numbers did offline, matching our intuitions at the beginning of this section. 

There are some interesting things to note about the quantitative results in Theorem \ref{quantitative online} as well. In the first bullet point there is an upper bound on adversarial sampling sample size that precisely matches the upper bound on regular sampling, simply with the VC-dimension replaced by the Littlestone dimension; this once again deepens the symmetry that has been developed throughout this survey. The lack of a tight lower bound, however, is interesting. While there is a tighter lower bound for $\epsilon$-nets instead of $\epsilon$-approximations, which correspond to realizable online learning instead of agnostic online learning (interested readers should read Section 7 of Alon et al.), neither bound matches the tightness we see in Theorem \ref{quantitative pac}. Perhaps there is further research to be done in order to improve the lower bound on adversarial sampling complexity, or perhaps there is much more variability in how difficult it is to achieve an Adversarial Uniform Law of Large Numbers for non-oblivious samplers (for all oblivious samplers like Bernoulli and Uniform, regular VC theory makes the bound tight). The second bullet point in Theorem \ref{quantitative online} tightens a until-now unsolved inequality for the optimal regret of agnostic online learners, definitively saying that optimal learners face regret of $\Theta(\sqrt{d \cdot T})$. In any case, these above results fill in crucial gaps in online learnability theory and strengthen the connections between sampling theory and statistical learning. In particular, the symmetry between oblivious sampling/offline learning and adversarial sampling/online learning is now made clear in a way that can hopefully inspire and enlighten future research.

\subsection{Overview of Proof Techniques}
In this final section, we highlight some of the key ideas and techniques used by Alon et al.\cite{Alon_ULLN} to prove the results of the previous section (Theorems \ref{online}, \ref{quantitative online}). The actual proof mechanics are wild and would require immense exposition, but there is still much to be learned about the connections between streaming and statistical learning from studying their high level approach. 

The first step of the proof of Theorem \ref{quantitative online} uses an online variant of a technique called \textit{double sampling}, a classic argument method credited to \cite{vc71}. In this application of double sampling (Section 8 in \cite{Alon_ULLN}), the authors replace the approximation error with respect to the entire stream with an approximation error with respect to a test set of the same size as the gathered sample. This has the effective result of reducing the size of the problem from gathering a sample of size $k$ from a stream of size $n$ to gathering a sample of size $k$ from a stream of size $2k$. Importantly, the adversary sees the $2k$ elements within the stream that are selected, but still has no more of an idea about what the final $k$ sampled elements will be. By doing so, the minimization of the $\epsilon$-approximation error reduces to minimizing the discrepancy in a well known online combinatorial game, defined below.
\begin{definition}[Online Combinatorial Discrepancy]
    The online discrepancy game with respect to $\mathcal{R}$ is a sequential game played between a painter and an adversary which proceeds as follows: at each round $t = 1, . . . , 2k$ the adversary places an item $x_t$ on the board, and the painter colors $x_t$ in either red or blue. The goal of the painter is that each set in $\mathcal{R}$ will be colored in a balanced fashion; i.e., if we denote by $I$ the set of indices of items from the stream $X$ that are colored red, the painter's goal is to minimize the discrepancy
    $$Disc_{2k}(\mathcal{R}, X, I) = \max_{R \in \mathcal{R}} \left| |X_I \cap R| - |X_{[2k] \setminus I} \cap R| \right| $$
\end{definition}
It is clear from the above definition that this is an equivalent formulation to the double sampling-reduced $\epsilon$-approximation optimization, since painting the item red or blue corresponds to sampling or not sampling the element, and coloring each set in $\mathcal{R}$ in a balanced fashion corresponds to preserving density in our sample. This reduction allows us to connect performance in combinatorial discrepancy (which in turn is connected to $\epsilon$-approximation) to a measure of complexity known as Sequential Rademacher Complexity \cite{sequential_radamacher}.
\begin{definition}
    The \textit{Sequential Rademacher Complexity} of a hypothesis class $\mathcal{R}$ is given by the expected discrepancy
    $$Rad_{n}(\mathcal{R})=\mathbb{E}\left[Disc_{n}(\mathcal{R}, X, I)\right]$$
\end{definition}
This is a useful direction to maneuver toward because of an already established result that the optimal regret of agnostic online learning satisfies $R_T(\mathcal{R}) \leq 2 Rad_T(\mathcal{R})$ (for more details, see Section 12 of \cite{Alon_ULLN}). In addition, it ties the sample complexity of $\epsilon$-approximation in the adversarial setting directly to Sequential Rademacher Complexity, leading us to the first item of the quantitative theorem. So, in order to arrive at Theorem \ref{quantitative online}, all that is left is to do is bound $Rad_{T}(\mathcal{R})$ by $O(\sqrt{Ldim(\mathcal{R}) \cdot T})$. The actual mechanics behind this involve fractional $(\epsilon, \gamma)$-covers of hypothesis classes $\mathcal{R}$ (probability measures over dynamic sets that agrees with most of $\mathcal{R}$) and a chaining argument, but the tooling for that part of the proof is beyond the scope of this survey (the details can be found in Section 9 of \cite{Alon_ULLN} for those interested). All in all, this high level overview of the proof is presented to once again emphasize the depth and richness of the connection between adversarial sampling and online learning theory, this time through the lens of the Sequential Rademacher Complexity.

\section{Remarks and Future Research}
We believe that the relationship between two until-recently disjoint fields of study --- online statistical learning and adversarially-robust streaming ---  that we have explored in this paper creates a new area of research with much promise and direction. Some particular directions of further research could pertain to the study of non-oblivious samplers that are created using the robustification methods of \cite{linear_sketch}, \cite{windows}, and others. These methods for mechanically turning an arbitrary oblivious streaming algorithm into an adversarially-robust one could result in new sampling schema that yield better performance on adversarial $\epsilon$-approximation, different bounds on Sequential Rademacher Complexity, and in general could produce different results/perspectives than those found in Alon et al. using oblivious samplers. In the other direction, perhaps the bounds and connections to Rademacher Complexity theory could enlighten designers of sampling algorithms to devise techniques that are adversarially-robust and have practical standalone value in their application as sampling methods. There seem to be many promising directions to go to both build on and make use of the new connections explored throughout this research.

\section{Acknowledgements}
We would like to thank Professors Matthew Weinberg and Huacheng Yu for their kind feedback, reliable support, and incredible teaching. Thank you!

\newpage
\printbibliography

@article{linear_sketch,
      title={The space complexity of approximating the frequency moments},
      author={N. Alon and Y. Matias and M. Szegedy},
      year={1996},
      journal = {Proc. ACM STOC},
      pages = {20--29},
}

@misc{windows,
  doi = {10.48550/ARXIV.2011.07471},
  
  url = {https://arxiv.org/abs/2011.07471},
  
  author = {Woodruff, David P. and Zhou, Samson},
  
  keywords = {Data Structures and Algorithms (cs.DS), FOS: Computer and information sciences, FOS: Computer and information sciences},
  
  title = {Tight Bounds for Adversarially Robust Streams and Sliding Windows via Difference Estimators},
  
  publisher = {arXiv},
  
  year = {2020},
  
  copyright = {Creative Commons Attribution 4.0 International}
}

@misc{sequential_radamacher,
  doi = {10.48550/ARXIV.1006.1138},
  
  url = {https://arxiv.org/abs/1006.1138},
  
  author = {Rakhlin, Alexander and Sridharan, Karthik and Tewari, Ambuj},
  
  keywords = {Machine Learning (cs.LG), Machine Learning (stat.ML), FOS: Computer and information sciences, FOS: Computer and information sciences},
  
  title = {Online Learning via Sequential Complexities},
  
  publisher = {arXiv},
  
  year = {2010},
  
  copyright = {arXiv.org perpetual, non-exclusive license}
}

@inproceedings{BenDavid2009AgnosticOL,
  title={Agnostic Online Learning},
  author={Shai Ben-David and D{\'a}vid P{\'a}l and Shai Shalev-Shwartz},
  booktitle={Annual Conference Computational Learning Theory},
  year={2009}
}

@article{littlestone,
author = {Littlestone, Nick},
title = {Learning Quickly When Irrelevant Attributes Abound: A New Linear-Threshold Algorithm},
year = {1988},
issue_date = {April 1988},
publisher = {Kluwer Academic Publishers},
address = {USA},
volume = {2},
number = {4},
issn = {0885-6125},
url = {https://doi.org/10.1023/A:1022869011914},
doi = {10.1023/A:1022869011914},
abstract = {Valiant (1984) and others have studied the problem of learning various classes of Boolean functions from examples. Here we discuss incremental learning of these functions. We consider a setting in which the learner responds to each example according to a current hypothesis. Then the learner updates the hypothesis, if necessary, based on the correct classification of the example. One natural measure of the quality of learning in this setting is the number of mistakes the learner makes. For suitable classes of functions, learning algorithms are available that make a bounded number of mistakes, with the bound independent of the number of examples seen by the learner. We present one such algorithm that learns disjunctive Boolean functions, along with variants for learning other classes of Boolean functions. The basic method can be expressed as a linear-threshold algorithm. A primary advantage of this algorithm is that the number of mistakes grows only logarithmically with the number of irrelevant attributes in the examples. At the same time, the algorithm is computationally efficient in both time and space.},
journal = {Mach. Learn.},
month = {apr},
pages = {285–318},
numpages = {34},
keywords = {incremental learning, linear-threshold algorithms, Learning from examples, prediction, learning Boolean functions, mistake bounds}
}

@book{statistical_learning_book,
author = {Shalev-Shwartz, Shai and Ben-David, Shai},
title = {Understanding Machine Learning: From Theory to Algorithms},
year = {2014},
isbn = {1107057132},
publisher = {Cambridge University Press},
address = {USA},
abstract = {Machine learning is one of the fastest growing areas of computer science, with far-reaching applications. The aim of this textbook is to introduce machine learning, and the algorithmic paradigms it offers, in a principled way. The book provides an extensive theoretical account of the fundamental ideas underlying machine learning and the mathematical derivations that transform these principles into practical algorithms. Following a presentation of the basics of the field, the book covers a wide array of central topics that have not been addressed by previous textbooks. These include a discussion of the computational complexity of learning and the concepts of convexity and stability; important algorithmic paradigms including stochastic gradient descent, neural networks, and structured output learning; and emerging theoretical concepts such as the PAC-Bayes approach and compression-based bounds. Designed for an advanced undergraduate or beginning graduate course, the text makes the fundamentals and algorithms of machine learning accessible to students and non-expert readers in statistics, computer science, mathematics, and engineering.}
}

@INCOLLECTION{vc71,
  title     = "On the uniform convergence of relative frequencies of events to
               their probabilities",
  booktitle = "Measures of Complexity",
  author    = "Vapnik, V N and Chervonenkis, A Ya",
  publisher = "Springer International Publishing",
  pages     = "11--30",
  year      =  2015,
  address   = "Cham"
}

@misc{adv_sampling,
      title={The Adversarial Robustness of Sampling},
      author={Omri Ben-Eliezer and Eylon Yogev},
      year={2019},
      eprint={1906.11327},
      archivePrefix={arXiv},
      primaryClass={cs.DS}
}

@ARTICLE{Alon_ULLN,
  title         = "Adversarial laws of large numbers and optimal regret in
                   online classification",
  author        = "Alon, Noga and Ben-Eliezer, Omri and Dagan, Yuval and Moran,
                   Shay and Naor, Moni and Yogev, Eylon",
  month         =  jan,
  year          =  2021,
  copyright     = "http://arxiv.org/licenses/nonexclusive-distrib/1.0/",
  archivePrefix = "arXiv",
  primaryClass  = "cs.LG",
  eprint        = "2101.09054"
}

@ARTICLE{streaming_framework,
  title         = "A framework for adversarially robust streaming algorithms",
  author        = "Ben-Eliezer, Omri and Jayaram, Rajesh and Woodruff, David P
                   and Yogev, Eylon",
  month         =  mar,
  year          =  2020,
  copyright     = "http://arxiv.org/licenses/nonexclusive-distrib/1.0/",
  archivePrefix = "arXiv",
  primaryClass  = "cs.DS",
  eprint        = "2003.14265"
}

@ARTICLE{sketch_suck,
  title         = "How robust are linear sketches to adaptive inputs?",
  author        = "Hardt, Moritz and Woodruff, David P",
  month         =  nov,
  year          =  2012,
  copyright     = "http://arxiv.org/licenses/nonexclusive-distrib/1.0/",
  archivePrefix = "arXiv",
  primaryClass  = "cs.DS",
  eprint        = "1211.1056"
}

@ARTICLE{0204_book,
  title     = "Data streams: Algorithms and applications",
  author    = "Muthukrishnan, S",
  journal   = "Found. Trends Theor. Comput. Sci.",
  publisher = "Now Publishers",
  volume    =  1,
  number    =  2,
  pages     = "117--236",
  year      =  2005,
  language  = "en"
}

@INPROCEEDINGS{Yao1977-tg,
  title           = "Probabilistic computations: Toward a unified measure of
                     complexity",
  booktitle       = "18th Annual Symposium on Foundations of Computer Science
                     (sfcs 1977)",
  author          = "Yao, Andrew Chi-Chin",
  publisher       = "IEEE",
  month           =  sep,
  year            =  1977,
  conference      = "18th Annual Symposium on Foundations of Computer Science
                     (sfcs 1977)",
  location        = "Providence, RI, USA"
}

@ARTICLE{Blasiok2018-so,
  title         = "Optimal streaming and tracking distinct elements with high
                   probability",
  author        = "B{\l}asiok, Jaros{\l}aw",
  month         =  apr,
  year          =  2018,
  copyright     = "http://arxiv.org/licenses/nonexclusive-distrib/1.0/",
  archivePrefix = "arXiv",
  primaryClass  = "cs.DS",
  eprint        = "1804.01642"
}

@ARTICLE{Chung2006-xr,
  title     = "Concentration inequalities and martingale inequalities: A survey",
  author    = "Chung, Fan and Lu, Linyuan",
  journal   = "Internet Math.",
  publisher = "Internet Mathematics",
  volume    =  3,
  number    =  1,
  pages     = "79--127",
  month     =  jan,
  year      =  2006
}

@article{vc_1,
      title={On the uniform convergence of relative frequen- cies of events to their probabilities},
      author={V.N. Vapnik and A.Y. Chervonenkis},
      year={1971},
      journal = {Theory Probab. Appl.},
      pages = {16:264–280},
}

@article{vc_2,
      title={Sharper bounds for Gaussian and empirical processes},
      author={M. Talagrand},
      year={1994},
      journal = {Ann. Prob.},
      pages = {22:28–76},
}

@book{geom_book,
    title={Handbook of Discrete and Computational Geometry},
    author={Nabil H. Mustafa and Kasturi R. Varadarajan},
    edition={3},
    chapter={47},
    year={2017},
}

@ARTICLE{network_entropy,
  title     = "Data streaming algorithms for estimating entropy of network
               traffic",
  author    = "Lall, Ashwin and Sekar, Vyas and Ogihara, Mitsunori and Xu, Jun
               and Zhang, Hui",
  journal   = "Perform. Eval. Rev.",
  publisher = "Association for Computing Machinery (ACM)",
  volume    =  34,
  number    =  1,
  pages     = "145--156",
  month     =  jun,
  year      =  2006,
  language  = "en"
}

@ARTICLE{evolution,
  title     = "Evolutionary network analysis",
  author    = "Aggarwal, Charu and Subbian, Karthik",
  journal   = "ACM Comput. Surv.",
  publisher = "Association for Computing Machinery (ACM)",
  volume    =  47,
  number    =  1,
  pages     = "1--36",
  month     =  jul,
  year      =  2014,
  language  = "en"
}

\end{document}